\documentclass[]{fairmeta}

\usepackage{wan}

\usepackage{fontawesome}

\usepackage[utf8]{inputenc} 
\usepackage[T1]{fontenc}    
\usepackage{hyperref}       
\usepackage{url}            
\usepackage{booktabs}       
\usepackage{amsfonts}       
\usepackage{nicefrac}       
\usepackage{microtype}      
\usepackage[table]{xcolor} 
\definecolor{darkgraycite}{gray}{0.45}

\hypersetup{
  colorlinks=true,
  linkcolor=black,
  citecolor=darkgraycite,
  urlcolor=black
}
\usepackage{graphicx}

\usepackage{wrapfig}
\usepackage{algorithm}
\usepackage{algpseudocode}

\usepackage{amsthm}
\theoremstyle{plain}
\newtheorem{theorem}{Theorem}

\newtheorem{remark}{Remark}

\crefname{theorem}{Theorem}{Theorems}
\crefname{lemma}{Lemma}{Lemmas}
\crefname{remark}{Remark}{Remarks}
\crefname{corollary}{Corollary}{Corollaries}
\crefname{observation}{Observation}{Observations}
\crefname{proposition}{Proposition}{Propositions}
\crefname{definition}{Definition}{Definitions}
\crefname{claim}{Claim}{Claims}
\crefname{fact}{Fact}{Facts}
\crefname{assumption}{Assumption}{Assumptions}
\crefname{example}{Example}{Examples}
\crefname{conjecture}{Conjecture}{Conjectures}
\definecolor{morandiblue}{HTML}{D6EAF8}
\usepackage{fancyhdr}

\definecolor{softgreen}{RGB}{110, 160, 120}

\newtcolorbox{takeawaybox_basemodel}[1]{
    colback=orange!5!white,    
    colframe=black,            
    arc=5pt,                   
    outer arc=5pt,
    boxrule=0.8pt,             
    left=5pt,                 
    right=5pt,                
    top=4pt,                   
    bottom=4pt,                
    fontupper=\small,          
    enhanced,
    before upper={\textbf{#1: }} 
}

\newtcolorbox{takeawaybox_rlmodel}[1]{
    colback=blue!5!white,    
    colframe=black,            
    arc=5pt,                   
    outer arc=5pt,
    boxrule=0.8pt,             
    left=5pt,                 
    right=5pt,                
    top=4pt,                   
    bottom=4pt,                
    fontupper=\small,          
    enhanced,
    before upper={\textbf{#1: }} 
}
\definecolor{earlyblue}{HTML}{88A2F1}
\definecolor{midgrey}{HTML}{fadcb4}
\definecolor{latered}{HTML}{EE9C88}
\definecolor{highlightgreen}{HTML}{80c66d}
\definecolor{highlightpurple}{HTML}{9b6d97}

\usepackage{arydshln} 
\usepackage{xstring}
\newcommand{\deltaval}[1]{%
  \IfBeginWith{#1}{+}{%
    {\textcolor{highlightgreen}{\textit{(#1)}}}%
  }{%
    \IfBeginWith{#1}{-}{%
      {\textcolor{highlightpurple}{\textit{(#1)}}}%
    }{%
      {\textit{(#1)}}%
    }%
  }%
}
\usepackage{pifont}

\usepackage[export]{adjustbox}

\newtcolorbox{promptbox}{
    colback=gray!8,
    colframe=black!20,
    boxrule=0.5pt,
    arc=3pt,
    left=6pt,
    right=6pt,
    top=6pt,
    bottom=6pt
}

\usepackage[most]{tcolorbox}

\newtcolorbox{modelquote}[1][]{
    colback=gray!5,      
    colframe=gray!40,    
    arc=2pt,             
    boxrule=0.5pt,       
    left=6pt, right=6pt, top=4pt, bottom=4pt,
    fonttitle=\bfseries\small,
    coltitle=black,
    breakable            
}

\title{dFlowGRPO: Rate-Aware Policy Optimization for Discrete Flow Models}

\author[1,2,*]{Zhengyan Wan}
\author[3]{Yidong Ouyang}
\author[1]{Panwen Hu}
\author[1,4]{Qiang Sun}

\renewcommand\affiliation[2][]{%
  \addtolist[#1]{#2}{\affiliationlist}{\affiliationformat}{\\}%
}

\affiliation[1]{Mohamed bin Zayed University of Artificial Intelligence}
\affiliation[2]{East China Normal University}
\affiliation[3]{University of California, Los Angeles}
\affiliation[4]{University of Toronto}

\contribution[*]{Work done as a visiting student at MBZUAI.}

\abstract{
Discrete flow models (DFMs) are a class of flexible generative models for generating discrete data, and diffusion large language models (dLLMs) can be viewed as a special case with a specific choice of mixture path and a masked source distribution. While several recent works have explored reinforcement learning into dLLMs, its application to more general discrete flow models remains underexplored. In this work, we present discrete Flow-GRPO (dFlowGRPO), a unified reinforcement learning framework for discrete flow models that supports a broad family of probability paths and non-masked source distributions. We derive the full trajectory probability for DFMs and formulate denoising as a Markov decision process, enabling dFlowGRPO to incorporate information from both the associated conditional transition rates and the posterior model during reinforcement learning. We apply dFlowGRPO to FUDOKI, a recent multimodal discrete flow model, and evaluate it on both image generation and multimodal understanding tasks. Empirical results show that dFlowGRPO outperforms existing GRPO-type methods for dLLMs on text-to-image generation tasks and achieves performance competitive with continuous flow-based models trained using FlowGRPO, while also demonstrating strong capabilities on understanding tasks.
}

\date{\today}

\metadata[GitHub]{\url{https://github.com/WanZhengyan/dFlowGRPO}}

\begin{document}


\maketitle

\section{Introduction}
Diffusion large language models (dLLMs) \citep{nie2024scaling,nie2025large,ye2025dream} is a popular and powerful paradigm in text generation, speeding up the inference process by unmasking tokens in parallel, as an alternative to autoregressive models. Discrete flow models (DFMs) \citep{campbell2024generative,gat2024discrete,shaul2024flow} provide a flexible matching-based framework to learn the denoising process from a source distribution to the data distribution, by marginalizing the conditional transition rate similar to continuous flow matching \citep{albergo2022building,liu2022flow,lipman2022flow}, offering a larger design space than dLLMs. Specifically, dLLM is the special case of DFM under a specific choice of a mixture probability path and a conditional transition rate.

To improve the performance of dLLMs, several works on reinforcement learning for dLLMs have recently emerged. Since the log-likelihood of dLLMs cannot be decomposed naturally like autoregressive models, the existing works on RL for dLLMs mainly focus on the estimation of the log-likelihood of dLLMs, such as mean-field approximation \citep{zhao2025d1} and ELBO-based approximation \citep{zhu2025llada,ou2025principled}. Despite the success of these RL methods for dLLMs, no existing work focuses on RL for DFMs and takes general probability path and conditional rate into account. In this work, we propose dFlowGRPO, a unified rate-aware policy optimization framework for DFMs by formulating denoising process as a Markov decision process, parallel to Flow-GRPO \citep{liu2025flow}. By deriving the probability of full trajectory, we find that the transition probability ratio of the denoising process of DFMs can be written as a product of the token-wise expected posterior ratio weighted by a rate-dependent term, which can be estimated efficiently using MC samples without additional forward pass. Notably, the proposed GRPO objective incorporate both the information of conditional transition rate and posterior model, providing effective signal for policy optimization. See \cref{fig:overview} for an overview of the dFlowGRPO.

Empirically, we evaluate the unified dFlowGRPO framework on both image generation and multimodal understanding tasks, using FUDOKI \citep[a multimodal DFM, ][]{wang2025fudoki} as our base model. For text-to-image genertaion, we use human preference PickScore \citep[model-based reward,][]{kirstain2023pick} and GenEval \citep[verifiable reward,][]{ghosh2023geneval} as our reward model for online RL training. Without classifier-free guidance, dFlowGRPO improves FUDOKI \citep{wang2025fudoki} from 20.87 to 23.00 on PickScore and from 61\% to 93\% on GenEval without reward hacking, demonstrating effectiveness and superior performance of dFlowGRPO on image generation tasks (see \cref{fig:overview} and \cref{tab:geneval}). For multimodal understanding, we train the base model on a multimodal dataset, ScienceQA \citep{lu2022learn}, with verifiable correctness reward (derived by exact matching and LLM extraction). dFlowGRPO boosts the understanding accuracy from 75.1\% to 81.1\% on ScienceQA (with LLM extraction) with very little reward hacking (see \cref{fig:understanding}). Additionally, we compare dFlowGRPO against other RL methods on FUDOKI, including online DPO \citep{guo2024direct}, diffu-GRPO and its variant \citep{zhao2025d1}. Experimental results show that dFlowGRPO consistently outperforms these approaches.

\begin{figure}
    \centering
    \includegraphics[width=0.95\linewidth]{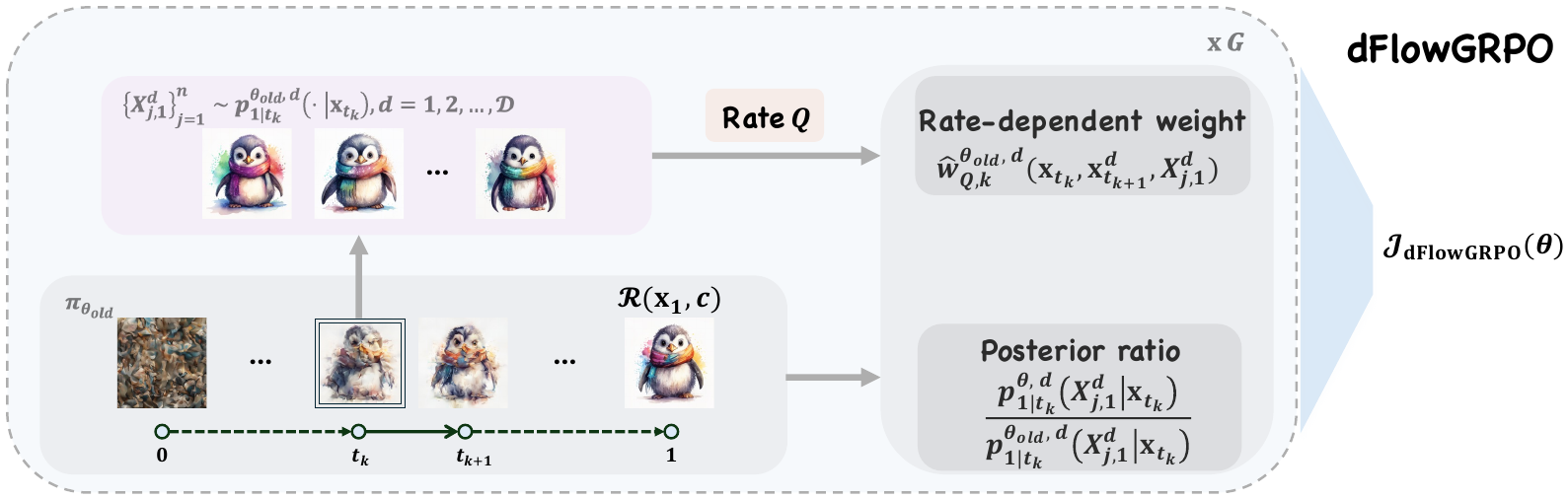}
    \includegraphics[width=0.5\linewidth]{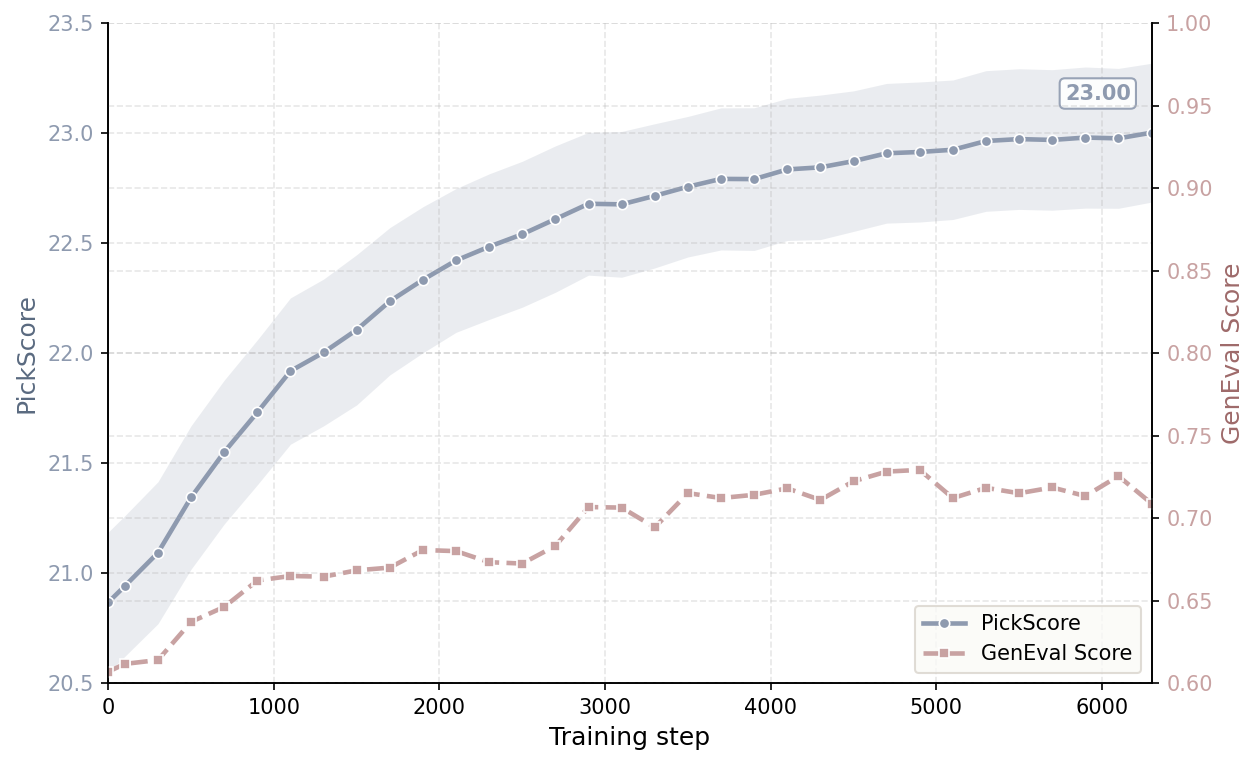}
    \includegraphics[width=0.35\linewidth]{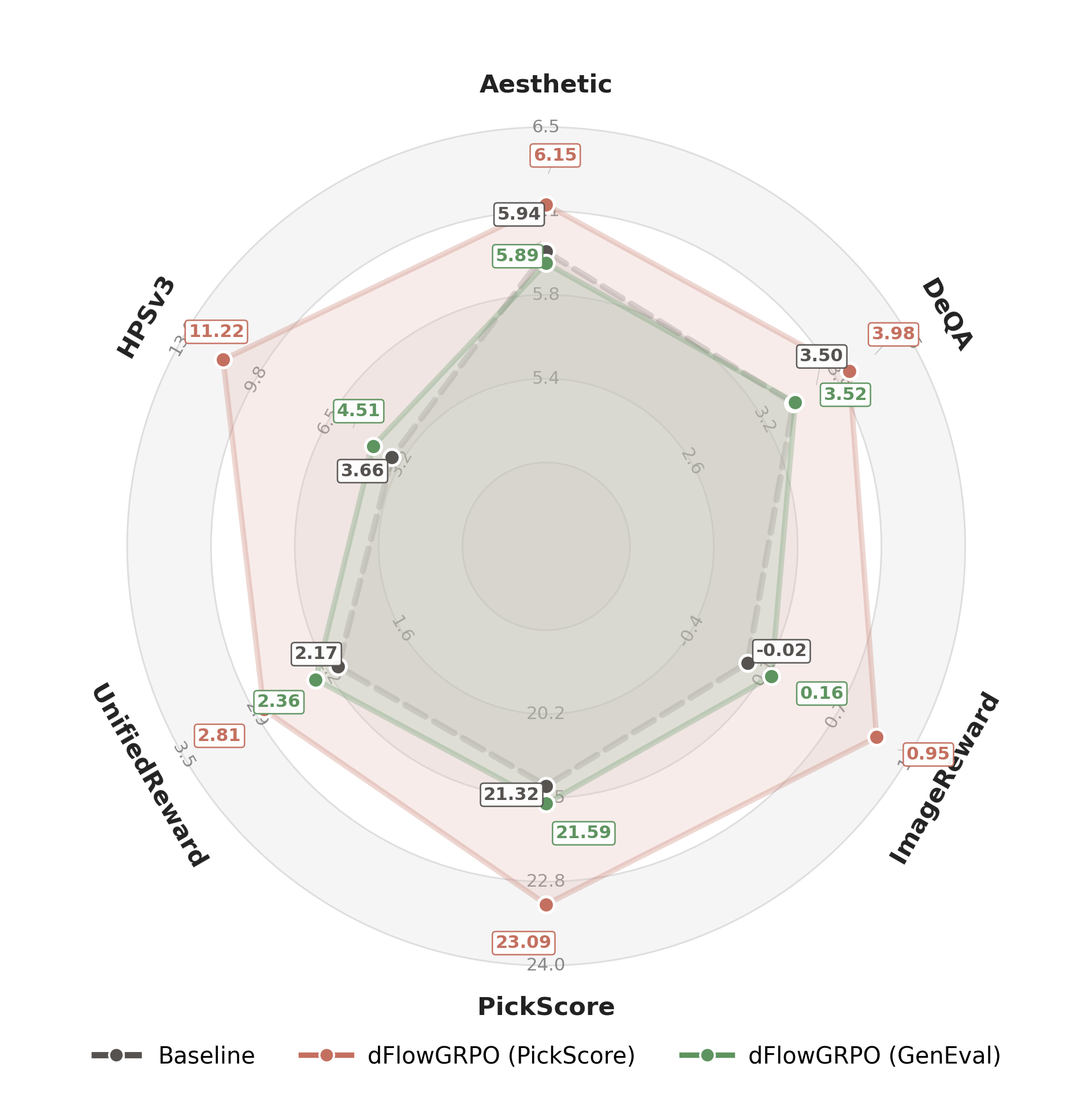}
    \caption{\small (Top) Overview of the proposed dFlowGRPO framework (\eqref{eq:GRPO_DFM}). At each denoising step, we sample MC samples from posterior model for computing rate-dependent weight and posterior ratio, and then we calculate the rate-aware transition probability ratio by \eqref{eq:decoupled transition prob ratio}. (Bottom, Left) PickScore and GenEval Score steadily increase during the traning process of dFlowGRPO trained with PickScore reward without KL regularization. (Bottom, Right) The performance of dFlowGRPO trained with PickScore and GenEval reward without KL regularizaton, evaluated by other metric on DrawBench prompts \citep{saharia2022photorealistic}. (These results use 8 NFEs for training and 20 NFEs for evaluation.)}
    \label{fig:overview}
\end{figure}

\paragraph{Contribution} We summarize our main contributions as follows.
\begin{enumerate}
    \item We propose dFlowGRPO, a unified rate-aware policy optimization framework for DFMs with general probability path and conditional rate by utilizing the information of both conditional rate and posterior model.
    \item We derive the transition probability ratio for the denoising trajectory, which can be written as a product of token-wise expected posterior ratio reweighted by a rate-dependent term. This term can be estimated using MC samples without additional forward pass.
    \item We empirically show the effectiveness of dFlowGRPO by applying it to a multimodal DFM. dFlowGRPO improves the reward of base model significantly and consistently on text-to-image generation and multimodal understanding tasks. 
\end{enumerate}
\section{Related work}

\paragraph{Reinforcement learning for continuous diffusion and flow-based models}
Reinforcement learning methods, such as RLHF \citep{ouyang2022training} and RLVR \citep{lambert2024tulu}, have achieved significant successes in improving the capabilities of AR-based LLMs. Several works have boosted the performance of continuous diffusion and flow-based models via reinforcement learning by developing variants of existing policy gradient methods, such as PPO \cite{schulman2017proximal} and GRPO \cite{shao2024deepseekmath,guo2025deepseek}. For example, DPOK \citep{fan2023dpok} and DDPO \citep{black2023training} proposed policy gradient method by formulating the denoising process of diffusion models as a Markov decision process, achieving better performance than reward-weighted methods \citep{lee2023aligning}. Flow-GRPO \cite{liu2025flow} and DanceGRPO \cite{xue2025dancegrpo} generalize GRPO to continuous Flow-based models by converting ODE to the equivalent SDE to derive transition probability. For DPO variants, \cite{Wallace_2024_CVPR} proposed Diffusion-DPO to improve text-to-image performance of diffusion models with a tractable DPO objective. \cite{liu2026improving} generalize DPO to continuous flow-based models for video generation.


\paragraph{Discrete diffusion and flow-based models}
As an alternative to AR-based models, discrete diffusion models \citep{austin2021structured,hoogeboom2021autoregressive,campbell2022continuous,sun2022score,lou2023discrete} aim to learn a denoising process, usually modeled as a continuous-time Markov chain (CTMC), which can transport a source distribution to the data distribution, parallel to the continuous counterparts \citep{ho2020denoising,song2020score,song2020denoising}. Diffusion large language models \citep{nie2024scaling,nie2025large} have shown promising results in language modeling, cachieving performance comparable to that of autoregressive models. Instead of learning the time-reversal of the forward process in diffusion-based models, discrete flow-based models \citep{gat2024discrete,shaul2024flow,campbell2024generative} offer a flexible framework for learning a transition rate by marginalizing a pre-specified conditional transition rate (velocity), providing a larger design space than discrete diffusion models, similar to continuous flow matching \citep{liu2022flow,albergo2022building,lipman2022flow}.

\paragraph{Reinforcement learning for dLLMs}
The main challenge of RL for dLLMs is the log-likelihood estimation, which cannot be naturally decomposed as in autoregressive models. To tackle this issue, existing works have proposed various methods for estimating the log-likelihood, including mean-field approximation \citep{zhao2025d1} and ELBO approximation \citep{zhu2025llada,bie2025llada2,bie2026llada2,ma2026consolidating,ou2025principled}. Additionally, several variants and efficient strategies for ELBO-based methods have been developed, such as token-level ELBO-based methods \citep{yang2025mmada,gong2026diffucoder}, complementary masking \citep{li2026lavida,li2026lavidar1}, and semi-deterministic MC \citep{rojas2026improving}. \cite{tang2026wd} proposed an advantage-weighted log-likelihood maximization approach for RL. \cite{huang2026reinforcing} introduced diffusion chain-of-lateral-thought for dLLMs by optimizing the entire trajectory with a GRPO-based method. By formulating the denoising process of dLLMs as a Markov decision process, \cite{wang2025d2} and \cite{zhang2026dtrpo} derived the exact transition probability ratio for dLLMs and developed GRPO- and DPO-based methods, respectively. \cite{wang2026revolutionizing} accelerated RL training by aggregating neighboring steps and incorporated a value model to improve training stability.

\section{Preliminary}
In this section, we briefly introduce discrete flow-based models and formulate the denoising process as a Markov decision process.

\subsection{Discrete flow models}
Consider a data distribution $q_1$ in a discrete space $\mathcal{S}^\mc{D}$, where $\mc{S}$ is the vocabulary and $\mc{D}$ is the sequence length. We call a stochastic process $(\rvx_t)_{t\in[0,1]}$ a continuous-time Markov chain (CTMC) on $\mc{S}^\mc{D}$ if it satisfies the Markov property and for any $x,z\in\mc{S}^\mc{D}$
\begin{align}\label{eq:CTMC}
    \P(\rvx_{t+h}=z|\rvx_{t}=x)=\delta_x(z)+Q_t(x,z)h+o(h),
\end{align}
where $(Q_t(x,z))_{x,z\in\mc{S}^\mc{D}}$ is the transition rate matrix satisfying the rate properties: $\sum_{z\in\mc{S}^\mc{D}}Q_t(x,z)=0$ and $Q_t(x,z)\mathbbm{1}(x\neq z)\ge 0$ for any $x,z\in\mc{S}^\mc{D}$. Here, $\mathbbm{1}(\cdot)$ is the indicator function. Denote $(q_t)_{t\in[0,1]}$ as the marginal probability distribution (path) corresponding to $(\rvx_t)_{t\in[0,1]}$. We say $Q_t$ can generate $q_t$ if it satisfies the Kolmogorov forward equation $q_t(z)=\sum_{x\in\mc{S}^\mc{D}}q_t(x)Q_t(x,z)$ for $t\in[0,1]$.

Discrete flow models aim to learn a transition rate $Q_t$ that can transport a source distribution $q_0$ (e.g., a uniform distribution or a point mass at the mask token) to the data distribution $q_1$ by marginalizing a conditional transition rate $Q_t(x,z|x_1)$, parallel to continuous flow matching \cite{liu2022flow,albergo2022building}. Specifically, given a pre-defined conditional probability path $q_{t|1}(x|x_1)$ and a conditional transition rate $Q_t(x,z|x_1)$ that can generate $q_{t|1}$ conditional on $x_1$, the marginal transition rate $Q_t(x,z)=\sum_{x_1\in\mc{S}^\mc{D}}Q_t(x,z|x_1)p_{1|t}(x_1|x)$ can generate the marginal probability path $q_t$ \citep[see Proposition 3.1 in][]{campbell2024generative}, where $q_{1|t}$ is the posterior corresponding to $q_{t|1}$. For computational tractability, the conditional probability path and the associated transition rate are usually restricted to be independent corresponding to each dimension conditional on the terminal state $\rvx_1$ (that is, $q_{1|t}(x|x_1)=\prod_{d=1}^\mc{D}q^d_{1|t}(x^d|x_1^d)$ and $Q_t(x,z|x_1)=\sum_{d=1}^\mc{D}\delta_{x^{\bsl d}}(z^{\bsl d})Q_t^d(x^d,z^d|x_1^d)$), which implies that the marginal transition rate matrix is sparse:
\begin{align*}
   Q_t(x,z)=\sum_{d\in\mc{D}}\delta_{x^{\bsl d}}(z^{\bsl d})Q_t^d(x,z^d),
\end{align*}
where $Q_t^d(x,z^d)=\sum_{x_1^d\in\mc{S}}Q_t^d(x^d,z^d|x_1^d)q^d_{1|t}(x_1^d|x)$. In particular, choosing the mixture path $q^d_{t|1}(x^d|x_1^d)=\kappa_t \delta_{x_1^d}(x^d)+(1-\kappa_t)\delta_\mask(x^d)$ with the conditional rate $Q_t^{d}(x^d,z^d|x_1^d)=\frac{\dot{\kappa}_t}{1-\kappa_t}(\delta_{x_1^d}(z^d)-\delta_{x^d}(z^d))$, we can recover the masked diffusion models \citep{shi2024simplified,ou2024your,sahoo2024simple,nie2025large}, where $\kappa_t$ is the time scheduler.

\paragraph{Training} In practice, we can parametrize the posterior $q_{1|t}$ using a neural network $p_{1|t}^{\theta,d}$. For general conditional probability paths, a generally applicable training objective is the following cross-entropy loss \citep{campbell2024generative,shaul2024flow,gat2024discrete,wang2025fudoki}

$$\mc{L}(\theta)=-\E_{\rvx_1\sim q_1(\cdot),\rvx_t\sim q_{t|1}(\cdot|\rvx_1)}\brac{\sum_{d\in\mc{D}}\log p^{\theta,d}_{1|t}(\rvx_1^d|\rvx_t)}.$$

\paragraph{Sampling} To improve sampling efficiency for general probability paths and tackling the ill-defined issue of \eqref{eq:CTMC} when the step size $h$ is not sufficiently small, one can use the following always-valid Euler sampler\footnote{Refer to the official flow matching PyTorch library: \url{https://github.com/facebookresearch/flow_matching}} at each denoising step \cite{shaul2024flow,wang2025fudoki}:
\begin{enumerate}
    \item Given $\rvx_t$, for each $d\in\brac{\mc{D}}$, sample $X_1^d\sim p_{1|t}^{\theta,d}(\cdot|\rvx_t)$ in parallel, where $\brac{\mc{D}}=\set{1,2,\dots,\mc{D}}$;
    \item For each $d\in\brac{\mc{D}}$, set $\rvx_{t+h}^d=\rvx_t^d$ with probability $\exp(-h\lambda_t^d(\rvx_s^d,X_1^d))$ or set $\rvx_{t+h}=z\neq \rvx_t$ with probability $\setBig{1-\exp(-h\lambda_t^d(\rvx_t^d,X_1^d))}\frac{Q_t^d(\rvx_t^d,z^d|X_1^d)}{\lambda_t^d(\rvx_t^d,X_1^d)}$, where $\lambda_t^d(\rvx_t^d,X_1^d)=\sum_{z^d\neq \rvx_t^d}Q^d_{t}(\rvx_t^d,z^d|X_1^d)$.
\end{enumerate}
The details of the Euler solver can be found in \cref{alg:Euler sampler in general}.

\subsection{Denoising as a Markov decision process}

Consider a time discretization scheme $0=t_0<t_1\cdots<t_K=1$. Let $p(\rvx_{t_{k+1}}|\rvx_{t_{k}},\rvc)$ be the transition probability at the $k$-th timestep given a prompt (or context) $\rvc$. Following DDPO \citep{black2023training} and Flow-GRPO \citep{liu2025flow}, we can formulate the denoising process as a Markov Decision Process defined by the tuple $(\mathbf{S}, \mc{A}, P, r, \rho_0)$, where $\bf{S}$ is a set of states, $\mc{A}$ is a set of actions, $P:\mathbf{S}\times\mc{A}\times\mathbf{S}\to \R$ is the transition probability, $r:\mathbf{S}\times\mc{A}\to\R$ is the reward function and $\rho_0:\bf{S}\to \R$ is the distribution of the initial state $s_0$ (considering a constant discount factor for simplicity). At the $k$-th step with a state $\rvs_k=(\rvc,k,\rvx_{t_k})$, an agent takes a stochastic action $\rva_k=\rvx_{t_{k+1}}$ with probability $\pi(\rva_{{k}}|\rvs_{k})=p(\rvx_{t_{k+1}}|\rvx_{t_{k}},\rvc)$ and receives a reward $R(\rvs_k,\rva_k)$, moving to the next state $\rvs_{k+1}$ with deterministic transition $P(\rvs_{k+1}|\rvs_{{k}},\rva_{{k}})=(\delta_{\rvc},\delta_{k+1},\delta_{\rvx_{t_{k+1}}})$. In continuous diffusion or flow-based models, it is common to take $\rho_0(\rvs_0)=(p_c(\rvc),\delta_0,\mc{N}(\bf{0},\bf{I}))$ and assign the reward of the entire denoising trajectory to the terminal state; that is, $R(\rvs_k,\rva_k)=\mc{R}(\rvx_{1},\rvc)$ if $k=K-1$, and $0$ otherwise. The goal of RL is to maximize the expected return $\E_{\rvc\sim p_c,\pi_\theta(\cdot|\rvc)}\brac{\sum_{k=0}^{K-1}R(\rvs_k,\rva_k)}$, which has the gradient $\E_{\rvc\sim p_c,\pi_\theta(\cdot|\rvc)}\brac{\sum_{k=0}^{K-1}\nabla_\theta\log p_\theta(\rvx_{t_{k+1}}|\rvx_{t_{k}},\rvc)\mc{R}(\rvx_1,\rvc)} $.

\section{Discrete Flow-GRPO}

In this section, we introduce Discrete Flow-GRPO (dFlowGRPO), a unified reinforcement framework for discrete flow models based on GRPO framework \cite{shao2024deepseekmath,guo2025deepseek}.

\subsection{GRPO for discrete flow models}

Given a prompt $\rvc\sim p_c$ and the number of denoising steps $K$, we can generate $G$ final samples $\set{\rvx_1^{(i)}}_{i\in\brac{G}}$ using the Euler solver discussed in the previous section. Let $\set{(\rvx^{(i)})}_{i\in\brac{G}}\overset{\triangle}{=}\set{(\rvx^{(i)}_{t_0},\rvx^{(i)}_{t_1},\dots,\rvx^{(i)}_{t_K})}_{i\in\brac{G}}$ be the corresponding denoising trajectory. Since the denoising process can be formulated as a Markov decision process and the full trajectory probability can be decoupled $(p(\rvx^{(i)})=p(\rvx_0^{(i)})\prod_{k=0}^{K-1}p(\rvx_{t_{k+1}}^{(i)}|\rvx_{t_{k}}^{(i)},\rvc))$, following Flow-GRPO \citep{liu2025flow}, we can use the following training objective to optimize the policy model:
\begin{equation}\label{eq:GRPO_DFM}
   \begin{aligned}
    &~\mc{J}_{\text{dFlowGRPO}}(\theta)=\E_{\rvc\sim p_c,\set{\rvx^{(i)}}_{i=1}^G\sim\pi_{\theta_{old}}(\cdot|\rvc)}\\
    &~\quad\setBig{\frac{1}{G}\sum_{i=1}^G\frac{1}{K}\sum_{k=0}^{K-1}\parenBig{\min(\brac{r_k^{(i)}(\theta)}^{\frac{1}{\mc{D}}}\hat{A}_k^{(i)},\text{clip}(\brac{r_k^{(i)}(\theta)}^{\frac{1}{\mc{D}}},1-\eps,1+\eps)\hat{A}_k^{(i)})-\beta D_{KL}(\pi_\theta\|\pi_{ref})}},
\end{aligned} 
\end{equation}
where $\eps$ is the clipping parameter, $r_t^{(i)}(\theta)$ is the step-level transition probability ratio between the current policy and the old policy and the estimated advantage $\hat{A}_k^{(i)}$ can be computed using the normalized reward in the group, which is independent of $k$:
\begin{align*}
r_k^{(i)}(\theta)=&~\frac{p_\theta(\rvx_{t_{k+1}}^{(i)}|\rvx_{t_{k}}^{(i)},\rvc)}{p_{\theta_{old}}(\rvx_{t_{k+1}}^{(i)}|\rvx_{t_{k}}^{(i)},\rvc)}; \quad \hat{A}_k^{(i)}=\frac{\mc{R}(\rvx_{1}^{(i)},\rvc)-\text{mean}(\set{\mc{R}(\rvx_{1}^{(i)},\rvc)}_{i\in\brac{G}})}{\text{std}(\set{\mc{R}(\rvx_{1}^{(i)},\rvc)}_{i\in\brac{G}})}.
\end{align*}

\begin{remark}
    Since the transition probability ratio can be written as a product of the token-level probability ratio (see \eqref{eq:decoupled transition prob ratio}), we use the geometric mean of the token-level probability ratio for training stability in \eqref{eq:GRPO_DFM}, which is similar to GSPO \citep{zheng2025group}, GMPO \citep{zhao2025geometric} and ESPO \citep{ou2025principled}. We derive the gradient of our dFlowGRPO objective in \cref{sec:appendix grad}. Similarly, we use the following estimator for KL regularization to improve training stability:
    \begin{align*}
        D_{KL}(\pi_\theta\|\pi_{ref})\approx&~ \brac{r_k^{(i)}(\theta)}^{\frac{1}{\mc{D}}}-\log \brac{r_k^{(i)}(\theta)}^{\frac{1}{\mc{D}}}-1\\
        \leq&~ \frac{1}{\mc{D}}\sum_{d\in\brac{\mc{D}}}\setBig{\frac{p^d_\theta(\rvx_{t_{k+1}}^d|\rvx_{t_k})}{p^d_{\theta_{old}}(\rvx_{t_{k+1}}^d|\rvx_{t_k})}-\log\frac{p^d_\theta(\rvx_{t_{k+1}}^d|\rvx_{t_k})}{p^d_{\theta_{old}}(\rvx_{t_{k+1}}^d|\rvx_{t_k})}-1},
    \end{align*}
    where the right-hand side is the token-level KL regularization similar to standard GRPO \citep{shao2024deepseekmath}.
\end{remark}

\subsection{Rate-aware transition probability ratio}
To compute the GRPO objective, we have to calculate the transition probability ratio for the Euler solver (\cref{alg:Euler sampler in general}). Following the discussion in \cite{wan2026corrected} and \cite{liang2025discrete}, in the time interval $[t_{k},t_{k+1}]$, the Euler sampler can be viewed as a CTMC with a time-homogeneous transition rate $Q^{\text{Euler}}_t(x,z)=\sum_{d=1}^\mc{D}\delta_{x^{\bsl d}}(z^{\bsl d})\delta_{\rvx_{t_k}^d}(x^d)\hat{Q}_{t_{k-1}}^d(\rvx_{t_k},z^d)$, where $\hat{Q}_{t_{k-1}}^d(\rvx_{t_k},z^d)=Q^d_{t_{k-1}}(\rvx_{t_k},z^d|X_1^d)$ is the estimator of the oracle rate using a single MC sample $X_1^d\sim p_{1|t}^d(\cdot|\rvx_{t_k})$. We first define the following unnormalized rate-dependent weight:
\begin{equation}\label{eq:unnormalized weight}
    \begin{aligned}
    &~\tilde{w}_{Q,k}^{d}(\rvx_{t_{k}},\rvx_{t_{k+1}}^d,X_1^d)=\mathbbm{1}(\rvx_{t_k}^d=\rvx_{t_{k+1}}^d)\parenBig{\exp\parenBig{-(t_{k+1}-t_k)\lambda_{t_k}^d(\rvx_{t_k}^d,X_1^d)}}\\
        &~\qquad \qquad  +\mathbbm{1}(\rvx_{t_k}^d\neq\rvx_{t_{k+1}}^d)\parenBig{\frac{Q_{t_k}(\rvx_{t_k}^d,\rvx_{t_{k+1}}^d|X_1^d)}{\lambda_{t_k}^d(\rvx_{t_k}^d,X_1^d)}\setBig{1-\exp\parenBig{-(t_{k+1}-t_k)\lambda_{t_k}^d(\rvx_{t_k}^d,X_1^d)}}},
\end{aligned}
\end{equation}
where $\lambda_{t_k}^d(\rvx_t^d,X_1^d)=\sum_{z^d\neq \rvx_{t_k}^d}Q^d_{t_k}(\rvx_{t_k}^d,z^d|X_1^d)$. We derive the transition probability ratio for the denoising trajectory of DFMs in the following theorem.

\begin{theorem}\label{thm:transition ratio}
    For sampling with the Euler solver (\cref{alg:Euler sampler in general}) and posterior model $p_{1|t}^{\theta}$, the transition probability is $p_\theta(\rvx_{t_{k+1}}|\rvx_{t_{k}})=\prod_{d\in\brac{\mc{D}}}p_\theta^d(\rvx_{t_{k+1}}^{d}|\rvx_{t_{k}})$ (here, we omit the prompt $\rvc$ for simplicity), where
    \begin{align*}
p^d_\theta(\rvx_{t_{k+1}}^d|\rvx_{t_k})=\E_{X_1^d\sim p^{\theta,d}_{1|t_{k}}(\cdot|\rvx_{t_{k}})}\brac{\tilde{w}_{Q,k}^{d}(\rvx_{t_{k}},\rvx_{t_{k+1}}^d,X_1^d)}.
    \end{align*}
    Consequently, the transition probability ratio between the current policy and the old policy is
    \begin{align}\label{eq:decoupled transition prob ratio}
        r_k(\theta)=\prod_{d\in\brac{\mc{D}}}\frac{p^d_\theta(\rvx_{t_{k+1}}^d|\rvx_{t_k})}{p^d_{\theta_{old}}(\rvx_{t_{k+1}}^d|\rvx_{t_k})}=\prod_{d\in\brac{\mc{D}}}\E_{X_1^d\sim p^{\theta_{old}}_{1|t_{k}}(\cdot|\rvx_{t_{k}})} \setBig{\underbrace{w_{Q,k}^{\theta_{old},d}(\rvx_{t_{k}},\rvx_{t_{k+1}}^d,X_1^d)}_{\text{rate-dependent weight}}\underbrace{\frac{p^{\theta}_{1|t_{k}}(X_1^d|\rvx_{t_{k}})}{p^{\theta_{old}}_{1|t_{k}}(X_1^d|\rvx_{t_{k}})}}_{\text{posterior ratio}}},
    \end{align}
    where the rate-dependent weight (satisfying $\E_{X_1^d\sim p^{\theta}_{1|t_{k}}(\cdot|\rvx_{t_{k}})} \brac{w_{Q,k}^{\theta,d}(\rvx_{t_{k}},\rvx_{t_{k+1}}^d,X_1^d)}=1$) is defined by
    \begin{align}\label{eq:normalized weight}
       &~w_{Q,k}^{\theta,d}(\rvx_{t_{k}},\rvx_{t_{k+1}}^d,X_1^d)=\frac{\tilde{w}_{Q,k}^{d}(\rvx_{t_{k}},\rvx_{t_{k+1}}^d,X_1^d)}{p_{\theta}^d(\rvx_{t_{k+1}}^d|\rvx_{t_k})}. 
    \end{align}
\end{theorem}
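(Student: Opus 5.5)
The proof follows the two-stage structure of the Euler sampler in \cref{alg:Euler sampler in general}. Conditional on $\rvx_{t_k}$, each coordinate $d$ first draws its own $X_1^d\sim p^{\theta,d}_{1|t_k}(\cdot|\rvx_{t_k})$, and then updates $\rvx_{t_{k+1}}^d$ using only $(\rvx_{t_k}^d, X_1^d)$ and fresh randomness. Different coordinates use independent randomness. So, given $\rvx_{t_k}$, the pairs $(X_1^d,\rvx_{t_{k+1}}^d)$ are independent across $d$, and $p_\theta(\rvx_{t_{k+1}}|\rvx_{t_k})$ factorizes as $\prod_d p^d_\theta(\rvx^d_{t_{k+1}}|\rvx_{t_k})$. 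Each factor is the marginal of $\rvx_{t_{k+1}}^d$ after integrating out $X_1^d$.

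To identify each factor, I would apply the law of total probability in coordinate $d$:
\[
p^d_\theta(\rvx^d_{t_{k+1}}|\rvx_{t_k})=\sum_{x_1^d\in\mc{S}} p^{\theta,d}_{1|t_k}(x_1^d|\rvx_{t_k})\,\P(\rvx^d_{t_{k+1}}\mid \rvx_{t_k},X_1^d=x_1^d).
\]
The inner conditional probability can be read directly from step 2 of the sampler. If $\rvx^d_{t_{k+1}}=\rvx^d_{t_k}$, it equals $\exp(-(t_{k+1}-t_k)\lambda^d_{t_k})$. Otherwise it equals $(1-\exp(-(t_{k+1}-t_k)\lambda^d_{t_k}))\,Q^d_{t_k}/\lambda^d_{t_k}$. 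This is exactly $\tilde w^d_{Q,k}$ in \eqref{eq:unnormalized weight}, which gives the first claim.

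Equivalently, one can use the CTMC view with the frozen rate $Q^{\text{Euler}}$ cited from \cite{wan2026corrected,liang2025discrete}. Once $X_1^d$ is fixed, each coordinate evolves as an independent homogeneous chain, and its first-jump law gives the same expression. In that view, multiple jumps within the interval should be excluded by the sampler's definition. I will simply take the algorithmic description as defining the transition, so this issue does not arise.

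For the ratio, the product form of $r_k(\theta)$ follows immediately from the factorization. For each $d$, I would write $p^d_\theta=\sum_{x_1^d}p^{\theta}_{1|t_k}(x_1^d|\rvx_{t_k})\tilde w^d_{Q,k}$ and change measure to $p^{\theta_{old}}_{1|t_k}$. This requires absolute continuity, namely that the old posterior has full support wherever the new one puts mass, which holds for softmax-parametrized posteriors. The change of measure gives
\[
p^d_\theta=\E_{X_1^d\sim p^{\theta_{old}}_{1|t_k}}\Bigl[\tilde w^d_{Q,k}\,\frac{p^{\theta}_{1|t_k}(X_1^d|\rvx_{t_k})}{p^{\theta_{old}}_{1|t_k}(X_1^d|\rvx_{t_k})}\Bigr].
\]
Dividing by $p^d_{\theta_{old}}$, which is deterministic given $\rvx_{t_k},\rvx^d_{t_{k+1}}$, and moving it inside the expectation produces $w^{\theta_{old},d}_{Q,k}$ as in \eqref{eq:normalized weight}, which is \eqref{eq:decoupled transition prob ratio}. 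The normalization $\E_{p^\theta_{1|t_k}}[w^{\theta,d}_{Q,k}]=1$ follows directly from the first claim, since the numerator's expectation is $p^d_\theta$. This requires $p^d_\theta(\rvx^d_{t_{k+1}}|\rvx_{t_k})>0$, which holds almost surely along sampled trajectories.

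I expect the only delicate point to be the conditional independence across coordinates. It requires the posterior sampling in step 1 to be done independently per coordinate given $\rvx_{t_k}$, and the jump decision for coordinate $d$ to depend only on $X_1^d$ and not on other coordinates' samples. I would state this explicitly as part of the sampler's definition. After that, the rest is bookkeeping.
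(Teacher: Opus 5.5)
Your proposal is correct and follows essentially the same argument as the paper. It factorizes over coordinates by conditional independence given $\rvx_{t_k}$, reads off each per-coordinate transition as the expectation of $\tilde{w}^d_{Q,k}$ over $X_1^d$ from the Euler step, then importance-weights to $p^{\theta_{old}}_{1|t_k}$ and divides by $p^d_{\theta_{old}}$; your explicit absolute-continuity and positivity conditions are minor additions the paper leaves implicit.
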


\cref{thm:transition ratio} indicates that the transition probability ratio in the denoising trajectory is a product of dimension-wise expected posterior ratios, each reweighted by a rate-dependent term. In particular, for the mixture path with masked source distribution $q_0^d(x)=\delta_\mask$ and conditional rate $Q_t^{d}(x^d,z^d|x_1^d)=\frac{\dot{\kappa}_t}{1-\kappa_t}(\delta_{x_1^d}(z^d)-\delta_{x^d}(z^d))$, the transition probability ratio is
\begin{align*}
    r_k^{(i)}(\theta)=\prod_{d: \rvx_{t_k}^d=\mask \text{ and } \rvx_{t_{k+1}}^d\neq\mask}\frac{p_{1|t}^{\theta,d}(\rvx_{t_{k+1}}^d|\rvx_{t_k})}{p_{1|t}^{\theta_{old},d}(\rvx_{t_{k+1}}^d|\rvx_{t_k})},
\end{align*}
which recovers the transition probability ratio derived in Theorem 3.2 of \cite{zhang2026dtrpo}. We defer the proof of Theorem 1 in \cref{sec:proof}.

\paragraph{Estimating rate-dependent weight}

For a general probability path \citep[e.g., metric-induced probability path introduced in][]{shaul2024flow,wang2025fudoki}, we can use the empirical version of the transition probability ratio to estimate $r_k(\theta)$ at time $t_k$ without additional forward passes; that is,
\begin{align}\label{eq:ratio estimator}
    \hat{r}_k(\theta)=\prod_{d\in\brac{\mc{D}}}\frac{\hat{p}^d_\theta(\rvx_{t_{k+1}}^d|\rvx_{t_k})}{\hat{p}^d_{\theta_{old}}(\rvx_{t_{k+1}}^d|\rvx_{t_k})}=\prod_{d\in\brac{\mc{D}}}\parenBig{\frac{1}{n}\sum_{j=1}^{n} \setBig{\hat{w}_{Q,k}^{\theta_{old},d}(\rvx_{t_{k}},\rvx_{t_{k+1}}^d,X_{j,1}^d)\frac{p^{\theta}_{1|t_{k}}(X_{j,1}^d|\rvx_{t_{k}})}{p^{\theta_{old}}_{1|t_{k}}(X_{j,1}^d|\rvx_{t_{k}})}}},
\end{align}
where $\set{X_{j,1}^d}_{j\in\brac{n}}$ are i.i.d. samples from $p^{\theta,d}_{1|t_{k}}(\cdot|\rvx_{t_k})$ and
\begin{align}\label{eq:weight estimator}
    \hat{w}_{Q,k}^{\theta_{old},d}(\rvx_{t_{k}},\rvx_{t_{k+1}}^d,X_{j,1}^d)=\frac{\tilde{w}_{Q,k}^{d}(\rvx_{t_{k}},\rvx_{t_{k+1}}^d,X_{j,1}^d)}{\frac{1}{n}\sum_{l=1}^{n}\tilde{w}_{Q,k}^{d}(\rvx_{t_{k}},\rvx_{t_{k+1}}^d,X_{l,1}^d)}.
\end{align}
When we only use one single Monte Carlo sample (i.e., $n=1$), the weight estimator is equal to one, which implies that the transition probability ratio estimator reduces to the product of the token-wise posterior. In this scenario, the estimation of the transition probability ratio is not only independent of the conditional rate but also unrelated to the next state $\rvx_{t_{k+1}}$, which has a nonignorable bias. Additionally, the ratio estimation with a single MC sample also leads to slow convergence during training; we will discuss this in the ablation study.

\section{Experiments}
In this section, we conduct comprehensive experiments on image generation and multimodal understanding tsaks to evaluate the performance of the proposed dFlowGRPO framework.
\subsection{Experimental setup}
We choose FUDOKI \citep{wang2025fudoki}, a multimodal DFM, as our base model. The metric-induced probability path and the corresponding kinetic-optimal conditional rate \citep{shaul2024flow} for FUDOKI are defined by
\begin{align*}
    q_{t|1}^d(x^d|x_1^d)=&~\text{softmax}(-\beta_t\cdot \tilde{d}(x^d,x_1^d)),\\
    Q^d_t(x^d,z^d|x_1^d)=&~q_{t|1}(z^d|x_1^d)\dot{\beta}\brac{\tilde{d}(x^d,x_1^d)-\tilde{d}(z^d,x_1^d)},
\end{align*}
where $\beta_t=3\cdot(\frac{t}{1-t})^{0.9}$ and $\tilde{d}(\cdot,\cdot)$ is a metric calculated based on the embedding layers adopted by FUDOKI. For image generation, the sequence length for the response is set to 576 and the resolution of the generated images is $384\times 384$ in FUDOKI. For multimodal understanding, the sequence length for text part is set to $500$. To accelerate training, we do not use classifier-free guidance, and the number of denoising steps is set as $8$ by default for all experiments. We adopt LoRA with rank $48$ and $\alpha=96$ to fine-tune FUDOKI. The implementation details can be found in \cref{sec:implementation details}.
\begin{table}[h!]
\centering
\caption{\small Evaluation of text-to-image generation ability on GenEval benchmark. $^{\dagger}$The results of FUDOKI (with CFG) are presented in their original paper.}
\label{tab:geneval}
\vspace{3pt}
\resizebox{1\textwidth}{!}{%
\begin{tabular}{lccccccc}
\toprule
\textbf{Model} 
   & \textbf{Single Obj.} & \textbf{Two Obj.} & \textbf{Counting} & \textbf{Colors} & \textbf{Position} & \textbf{Attribute}  & \textbf{Overall $\uparrow$} \\
\midrule
\multicolumn{8}{l}{\textit{\textbf{Generation-Only}}} \\
SDXL~\citep{Podell2023SDXLIL}  & 0.98 & 0.74 & 0.39 & 0.85 & 0.15 & 0.23 & 0.55 \\
DALL-E 3~\citep{BetkerImprovingIG}  & 0.96 & 0.87 & 0.47 & 0.83 & 0.43 & 0.45 & 0.67 \\
SD3.5-L~\citep{Esser2024ScalingRF}  & 0.98 & 0.89 & 0.73 & 0.83 & 0.34 & 0.47 & 0.71 \\
FLUX.1-dev~\citep{flux2024}  & 0.98 & 0.93 & 0.75 & 0.93 & 0.68 & 0.65 & 0.82 \\
SD3.5-M~\citep{Esser2024ScalingRF}  & 0.98& 0.78& 0.50& 0.81& 0.24& 0.52 & 0.63 \\
\ \ w/ Flow-GRPO \citep{liu2025flow}& 1.00 &0.99 &0.95 &0.92& 0.99& 0.86 &0.95\\
\midrule
\multicolumn{8}{l}{\textit{\textbf{Unified Understanding \& Generation}}} \\
Show-o~\citep{Xie2024ShowoOS}  & 0.95 & 0.52 & 0.49 & 0.82 & 0.11 & 0.28 & 0.53 \\
\ \ w/ Mask-GRPO \citep{luo2025reinforcement}&  0.99&0.90& 0.69& 0.85& 0.35& 0.59& 0.73\\
Janus-Pro-7B~\citep{Chen2025JanusProUM}  & 0.99 & 0.89 & 0.59 & 0.90 & 0.79 & 0.66 & 0.80 \\
MMaDA~\citep{yang2025mmada}  & 0.96 & 0.60 & 0.45 & 0.81 & 0.14 & 0.25 & 0.56 \\
\ \ w/ UniGRPO\citep{yang2025mmada}  & 0.99 & 0.76 & 0.61 & 0.84 & 0.20 & 0.37 & 0.63 \\
\ \ w/ MaskGRPO \citep{ma2026consolidating}  & 0.99 & 0.85 & 0.66 & 0.89 & 0.73 & 0.69 & 0.80 \\
Lumina-Dimoo \citep{xin2025lumina} &1.00 &0.94& 0.85 &0.89 &0.85& 0.76&0.88\\
\ \ w/ selfGRPO \citep{xin2025lumina} & 1.00& 0.96&0.87& 0.95& 0.85 &0.82& 0.91\\
FUDOKI$^{\dagger}$ \citep{wang2025fudoki} &0.96& 0.85& 0.56& 0.88& 0.68& 0.67& 0.77\\
\rowcolor{morandiblue}\ \ w/ dFlowGRPO  &0.97& 0.93 &0.92& 0.91 &0.88& 0.97& 0.93\\
\bottomrule
\end{tabular}%
}
\end{table}
\subsection{Text-to-image generation}
For text-to-image generation, we evaluate dFlowGRPO's generation capabilities on the GenEval \citep{ghosh2023geneval}, PickScore \citep{kirstain2023pick} and DrawBench \citep{saharia2022photorealistic} benchmarks. We train dFlowGRPO using GenEval and PickScore rewards on their corresponding training splits without KL regularization, respectively. We evaluate dFlowGRPO with $20$ NFEs for each reward. \cref{tab:geneval} presents the performance of dFlowGRPO on the GenEval benchmark, where the proposed framework improves FUDOKI significantly, achieving the best overall performance (0.93) among the unified multimodal models with fewer denoising steps without CFG. The experimental results of dFlowGRPO trained with PickScore reward are reported in \cref{fig:overview}. We can see that the PickScore and GenEval Score increase consistently during the training of dFlowGRPO, improving the PickScore of FUDOKI from 20.87 to 23.00 without reward hacking. Additionally, for evaluation results on DrawBench, we observe that dFlowGRPO trained with PickScore reward consistently improves other metrics, including HPSv3 \citep{ma2025hpsv3}, Aesthetic Score \citep{schuhmannlaion}, DeQA \citep{you2025teaching}, ImageReward \citep{xu2023imagereward} and UnifiedReward \citep{wang2025unified}. For dFlowGRPO trained with GenEval reward, the fine-tuned model also improves five rewards evaluated on DrawBench, with only a slight decrease in Aesthetic Score.

\begin{figure}[htbp]
    \centering
    \includegraphics[width=0.5\linewidth]{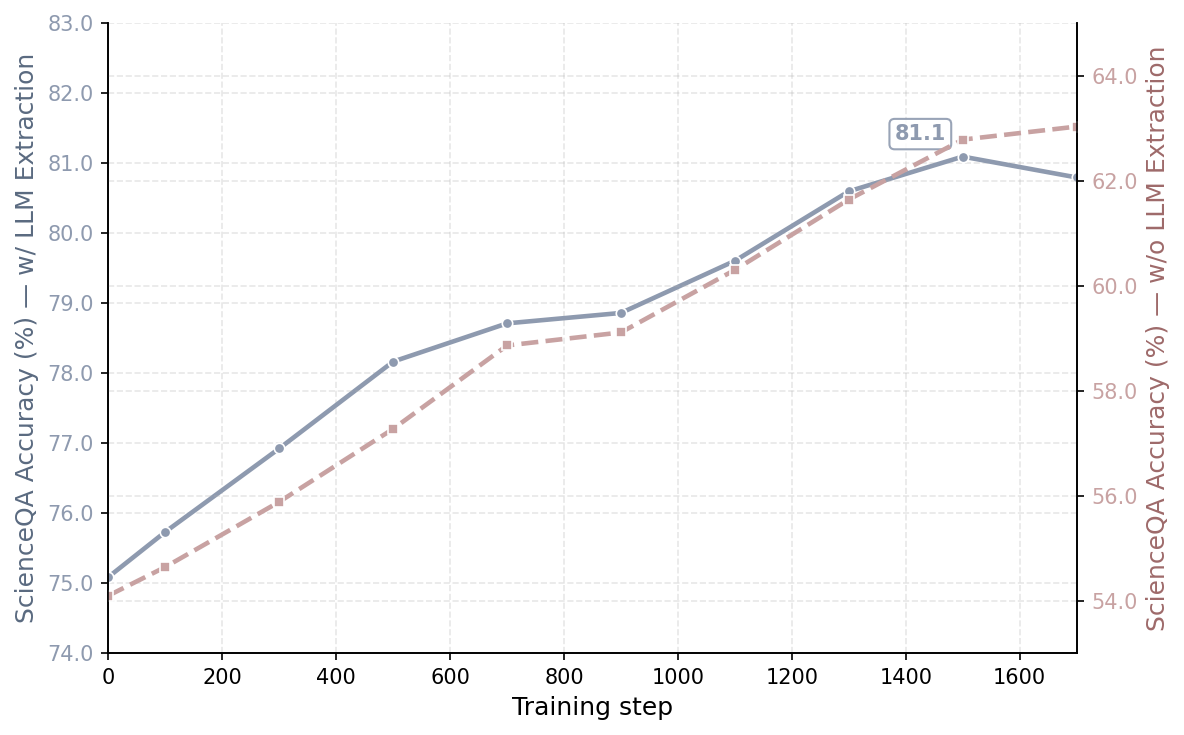}\includegraphics[width=0.35\linewidth]{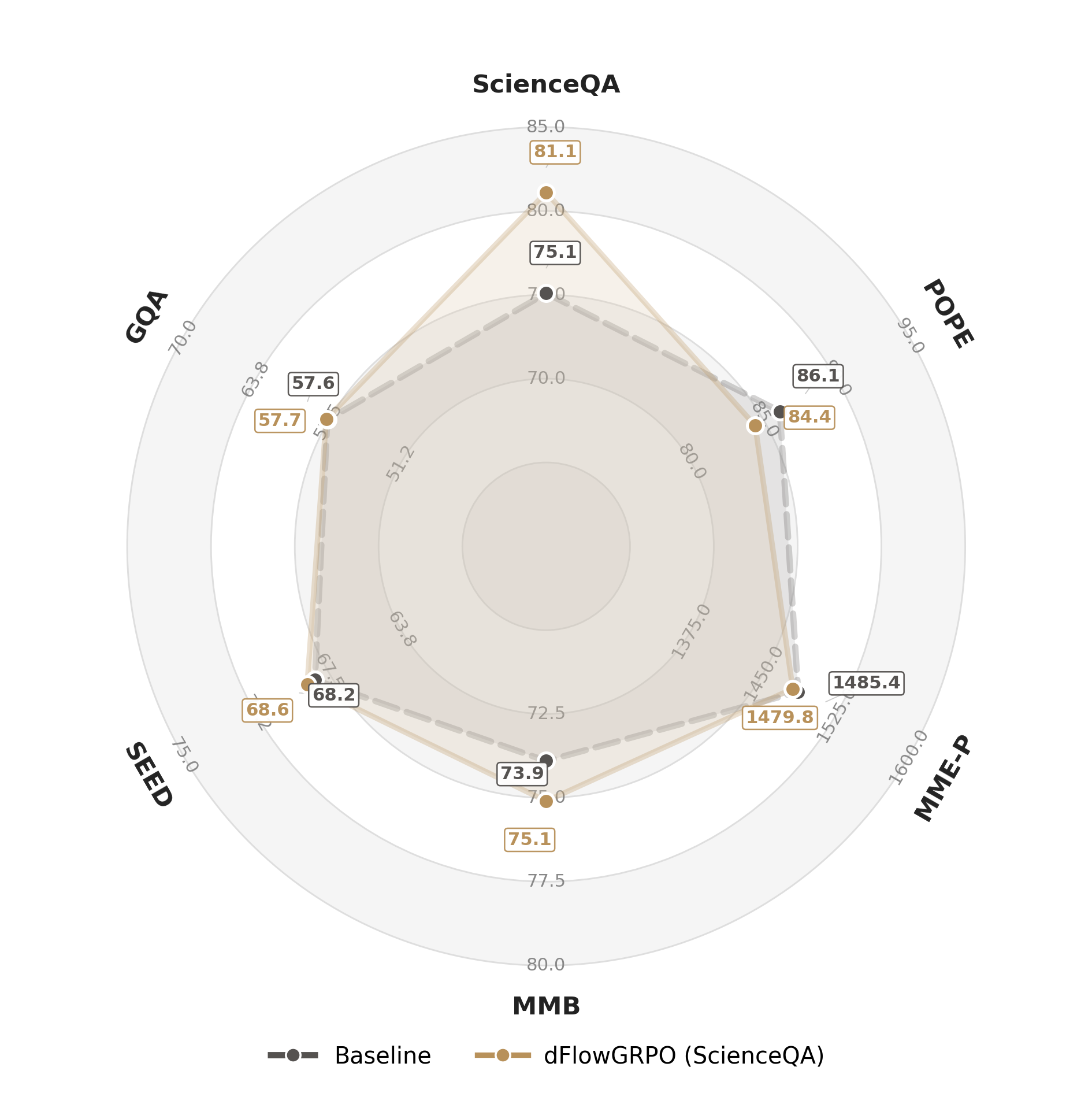}
    
    \caption{\small (Left) Training dynamics of dFlowGRPO trained on ScienceQA training split; the evaluation results are obtained based on test split. (Right) Evaluation results of dFlowGRPO on several benchmarks.}
    \label{fig:understanding}
\end{figure}
\subsection{Multimodal understanding}
We evaluate the understanding capability of dFlowGRPO on six benchmarks, including POPE \citep{li2023evaluating}, MME-P \citep{fumme}, SEED \citep{li2023seed}, MMB \citep{fumme}, GQA \citep{hudson2019gqa}, and ScienceQA \citep{lu2022learn}. We fine-tune FUDOKI with dFlowGRPO on ScienceQA training split with KL regularization $\beta=0.01$, and the 0-1 correctness reward for each answer is obtained by exact matching between the generated answer and the ground truth (if matching fails, a judge LLM is used to extract and evaluate the answer). \cref{fig:understanding} shows the training dynamics of dFlowGRPO and the evaluation results on multimodal understanding benchmarks. We can observe that the proposed dFlowGRPO improves the accuracy of FUDOKI on ScienceQA from 75.1\% to 81.1\% with LLM extraction and from 54.1\% to 59.1\% without LLM extraction. Additionally, dFlowGRPO achieves better performance than the base model on GQA, SEED and MMB after training, showing very little reward hacking.

\subsection{Ablation study and discussion}
In this subsection, we conduct ablation studies on the effect of KL regularization and the effect of the number of MC samples, and compare dFlowGRPO with the mean-field approximation method for discrete flow models.
\begin{figure}[htbp]
    \centering
    \includegraphics[width=0.55\linewidth]{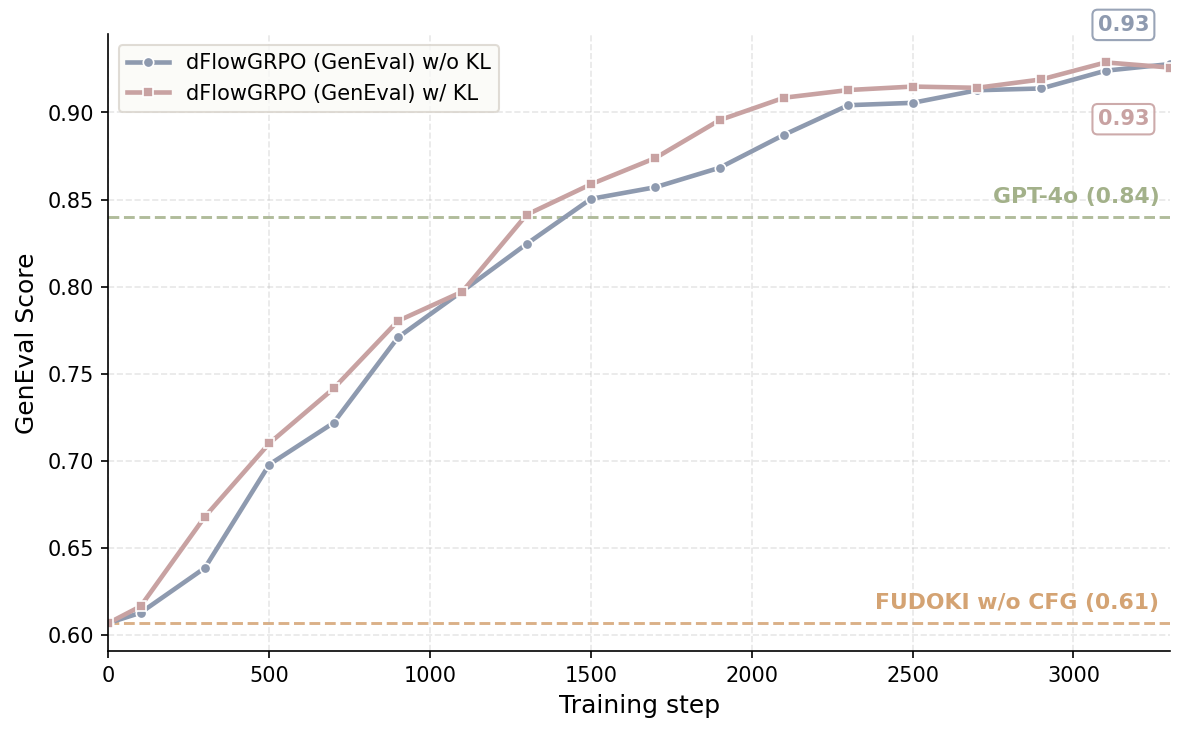}
    \includegraphics[width=0.35\linewidth]{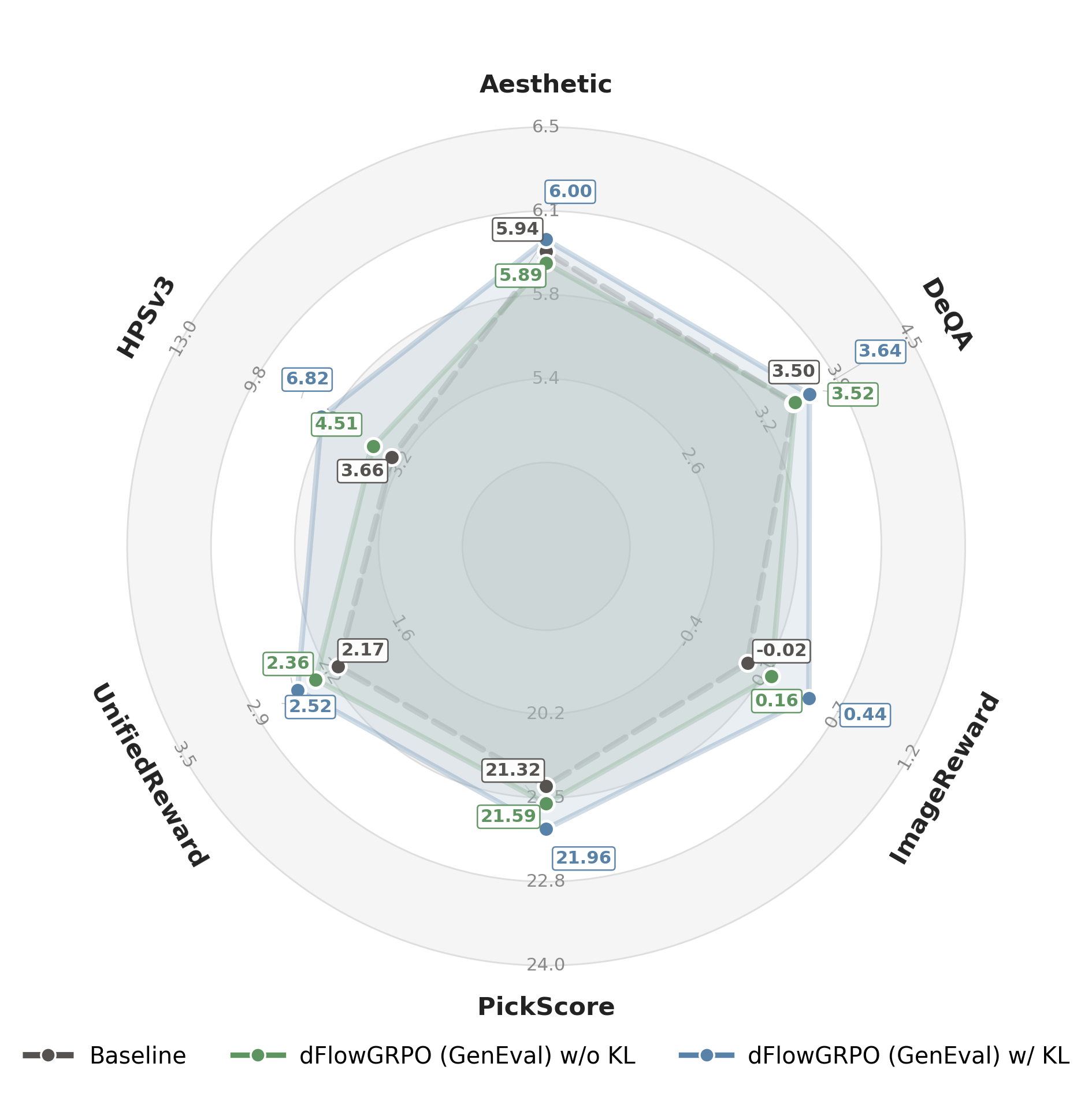}
    \caption{\small (Left) Training dynamics of dFlowGRPO on GenEval training split (w/ and w/o KL regularization); the evaluation results are obtained on test split. (Right) Performance of dFlowGRPO evaluated by other metric on DrawBench.}
    \label{fig:KL regularization}
\end{figure}
\paragraph{Effect of KL regularization}
We train dFlowGRPO with GenEval reward without and with KL regularization $\beta=0.01$. \cref{fig:KL regularization} presents the training dynamics of dFlowGRPO in both settings, where the training dynamics is similar and both curves achieve $0.93$ after $3300$ training steps. We also evaluate both models on DrawBench with different reward functions. We can see that KL regularization can mitigate reward hacking and improve other rewards significantly.
\paragraph{Effect of MC samples}
\begin{figure}[htbp]
    \centering
    \includegraphics[width=0.5\linewidth]{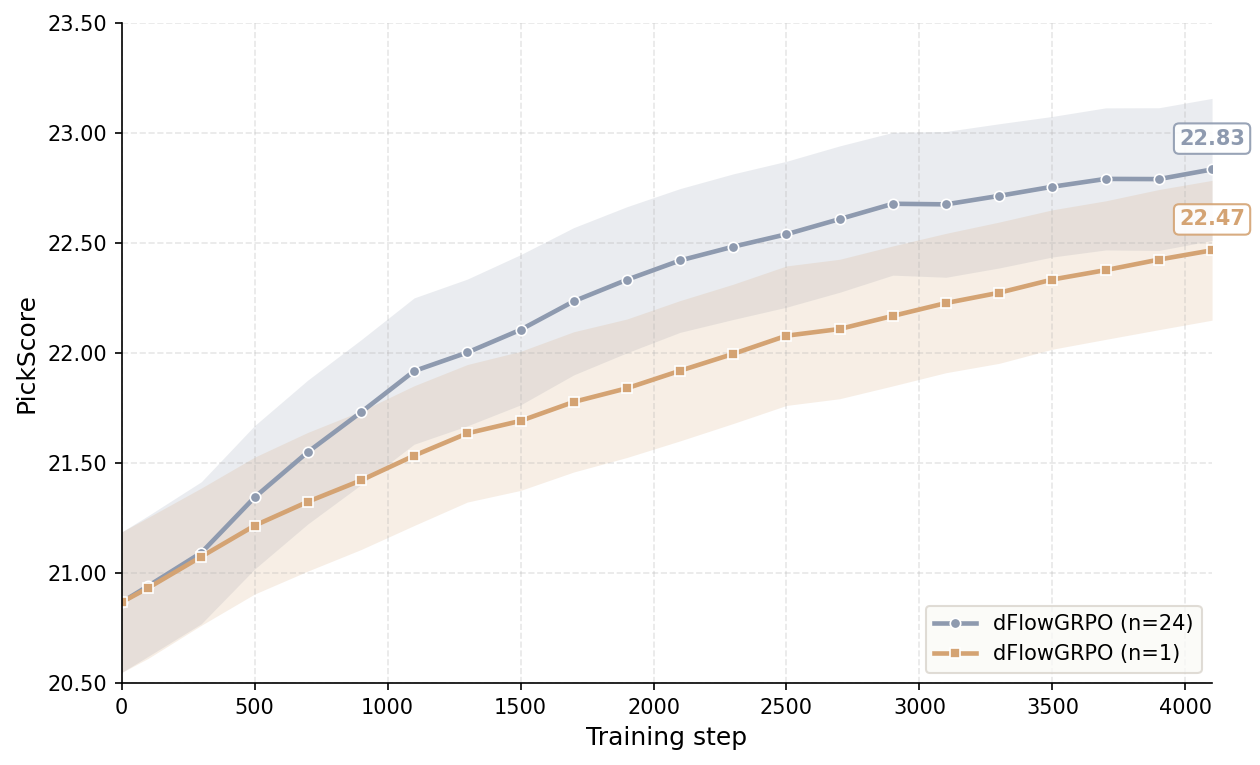}
    \caption{\small Comparison of dFLowGRPO with $n=24$ and $n=1$.}
    \label{fig:mc=1}
\end{figure}
To investigate the effect of the number of MC samples, we train dFlowGRPO with $n=1$ on the PickScore dataset. \cref{fig:mc=1} shows the training dynamics of dFlowGRPO with $n=1$ and $n=24$ for the first 4100 gradient updates. We can observe that the PickScore reward of dFlowGRPO with $n=24$ achieves 22.50 after 2300 training steps (which requires 29.44 ($\times 8$) H100 GPU hours), but the reward of dFlowGRPO with $n=24$ increases slowly during training and achieves 22.47 after 4100 training steps (which requires 38.13 ($\times 8$) H100 GPU hours), showing that more MC samples can accelerate training and convergence.
\FloatBarrier

\begin{wrapfigure}[14]{r}{0.5\textwidth}
    \vspace{-18pt}
    \centering
    \includegraphics[width=\linewidth]{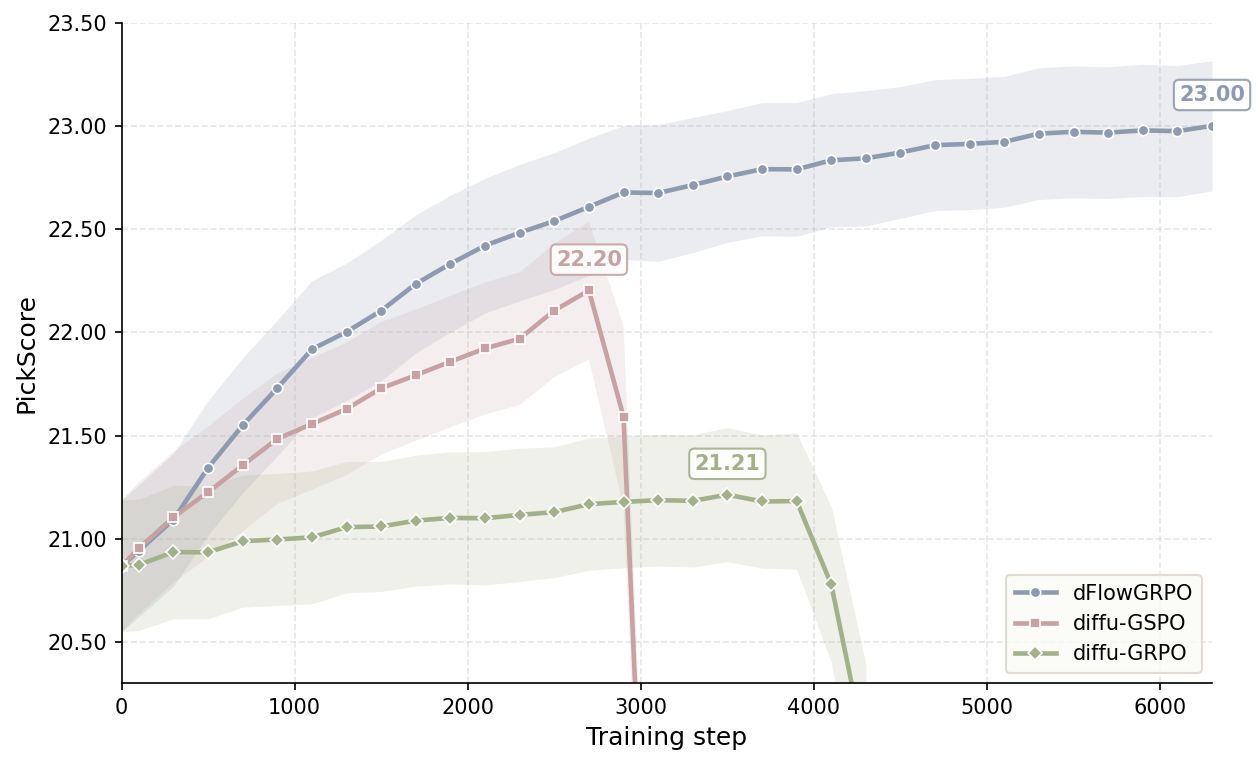}
    \vspace{-18pt}
    \caption{\small Comparison with mean-field approximation methods on PickScore.}
    \label{fig:meanfield approximation}
    \vspace{-24pt}
\end{wrapfigure}
\paragraph{Comparison with mean-field approximation method}
To verify the effectiveness of dFlowGRPO, we compare it with mean-field approximation methods \citep{zhao2025d1} on FUDOKI, a non-masked model. The diffu-GRPO objective and its geometric-mean variant, the diffu-GSPO objective, are defined in \eqref{eq:diffugrpo} and \eqref{eq:diffugspo}, respectively. \cref{fig:meanfield approximation} presents the training dynamics of dFlowGRPO and two mean-field approximation methods trained with the PickScore reward without KL regularization. For a fair comparison, we use the same number of denoising steps and the same clipping parameters for all three methods. Results show that the proposed dFlowGRPO framework is more stable and achieves higher reward than mean-field approximation methods.

\section{Conclusion}\label{sec:conclusion}
In this paper, we introduce dFlowGRPO, a unified RL training framework for DFMs. By deriving the transition probability ratio of the denoising process, dFlowGRPO leverages information from both the conditional rate and the posterior model. We evaluate the proposed methods on both image-to-text generation and multimodal understanding tasks, demonstrating the effectiveness and strong performance across various benchmarks.

A limitation of this paper is that we evaluate our framework on only a single DFM. In fact, building on our framework, one can also apply different rate-aware transition probability ratio to dLLMs by considering alternative conditional rates constructed according to the detailed balance condition \citep[see Appendix F in][]{campbell2024generative}. We leave this interesting direction for future work.

\clearpage
\newpage
\bibliographystyle{plainnat}
\bibliography{ref}

\clearpage
\newpage
\beginappendix
\section{Algorithms}

In this section, we present the always-valid Euler solver \citep[see \cref{alg:Euler sampler in general} and][]{shaul2024flow,wang2025fudoki} and the proposed dFlowGRPO algorithm (see \cref{alg:dFlowGRPO}).


\begin{algorithm}[htbp]
\caption{Euler solver \citep[Algorithm 1 in][]{shaul2024flow}}
\label{alg:Euler sampler in general}
\begin{algorithmic}[1]
\Require A conditional rate $Q_t^d(x^d,z^d|x_1^d)$, a posterior $p_{1|t}$, time partition $0=t_0<t_1<\cdots<t_K=1$.
\State Draw $\rvx_0\sim p_0$.
\For{$k=0$ to $K-1$}
    \State Set $Z=Y_{k}$.
    \For{$d=1$ to $\mc{D}$}  
    \State Sample $X_1^d\sim p_{1|t_{k}}^d(\cdot|\rvx_{t_k})$. \textcolor{gray}{(We can sample $n$ i.i.d. copies $\set{X_{j,1}}_{j\in\brac{n}}$ of $X_1^d$ and use the empirical mean $\dn\sumn Q^d_{t_{k}}(\rvx_{t_k}^d,z^d|X_{j,1}^d)$ to estimate $Q^d_{t_{k}}(\rvx_{t_k},z^d)$ for variance reduction.)}
    \State Set $\lambda_{k+1}^d=\sum_{z^d\neq \rvx_{t_k}^d}Q^d_{t_{k}}(\rvx_{t_k}^d,z^d|X_1^d)$.
    \State Set $Z^d=\begin{cases}
        z^d,&~\text{with probability } \setBig{1-\exp\parenBig{-(t_{k+1}-t_{k})\lambda_{k+1}^d}}\frac{Q_{t_{k}}^d(\rvx_{t_k}^d,z^d|X_1^d)}{\lambda_{k+1}^d}\\
        Z^d,&~\text{with probability } \exp\parenBig{-(t_{k+1}-t_{k})\lambda_{k+1}^d}
    \end{cases}$, where $z^d\neq Z^d$.
    \EndFor
    \State Set $\rvx_{t_{k+1}}=Z$.
  \EndFor
\State \Return $\rvx_{1}\sim p_{1}$
\end{algorithmic}
\end{algorithm}

\begin{algorithm}[htbp]\label{alg:dFlowGRPO}
\caption{dFlowGRPO}
\begin{algorithmic}[1]
\Require A conditional rate $Q_t^d(x^d,z^d|x_1^d)$, a posterior model $p^\theta_{1|t}$, time partition $0=t_0<t_1<\cdots<t_K=1$, the number of MC samples $n$.
\While{not converged}
\State Sample $\rvc\sim p_c$.
\For{$i=1$ to $G$}
\State Draw $\rvx_{0}^{(i)}\sim p_0$.
\For{$k=0$ to $K-1$}
    \State Set $Z^{(i)}=\rvx_{t_k}^{(i)}$.
    \For{$d=1$ to $\mc{D}$}  
    \State Sample $\set{X_{j,1}^{(i),d}}_{j\in\brac{n_{mc}}}\overset{i.i.d.}{\sim} p_{1|t_{k}}^{\theta_{old},d}(\cdot|\rvx_{t_k}^{(i)},\rvc)$. 
    \State Set $Q^{(i),d}_{t_{k}}(\rvx_{t_k}^{(i),d},z^d)=\frac{1}{n}\sum_{j\in\brac{n}}Q^{(i),d}_{t_{k}}(\rvx_{t_k}^{(i),d},z^d|X_{j,1}^{(i),d})$
    \State Set $\lambda_{k+1}^{(i),d}=\sum_{z^d\neq \rvx_{t_k}^{(i),d}}Q^{(i),d}_{t_{k}}(\rvx_{t_k}^{(i),d},z^d)$.
    \State Set $Z^{(i),d}=\begin{cases}
        z^d,&~\text{w.p. } \setBig{1-\exp\parenBig{-(t_{k+1}-t_{k})\lambda_{k+1}^{(i),d}}}\frac{Q^{(i),d}_{t_{k}}(\rvx_{t_k}^{(i),d},z^d)}{\lambda_{k+1}^{(i),d}}\\
        Z^{(i),d},&~\text{w.p. } \exp\parenBig{-(t_{k+1}-t_{k})\lambda_{k+1}^{(i),d}}
    \end{cases}$, where $z^d\neq Z^{(i),d}$.
    \EndFor
    \State Set $\rvx_{t_{k+1}}^{(i)}=Z^{(i)}$.
    \State Compute the rate-dependent weight $\hat{w}_{Q,k}^{\theta_{old},d}(\rvx_{t_k}^{(i)},\rvx_{t_{k+1}}^{(i),d},X_{j,1}^{(i),d})$ by \eqref{eq:weight estimator} for each $j\in\brac{n}, d\in\brac{\mc{D}}$.
  \EndFor
\EndFor
\State Set $\hat{A}^{(i)}=\frac{\mc{R}(\rvx_{1}^{(i)},\rvc)-\text{mean}(\set{\mc{R}(\rvx_{1}^{(i)},\rvc)}_{i\in\brac{G}})}{\text{std}(\set{\mc{R}(\rvx_{1}^{(i)},\rvc)}_{i\in\brac{G}})}$.
\State Compute $\hat{r}_k^{(i)}(\theta)=\prod_{d\in\brac{\mc{D}}}\frac{\hat{p}_\theta(\rvx_{t_{k+1}}^{(i),d}|\rvx_{t_k}^{(i)},\rvc)}{\hat{p}_{\theta_{old}}(\rvx_{t_{k+1}}^{(i),d}|\rvx_{t_k}^{(i)},\rvc)}$ for each $k\in\set{0,1,\dots,K-1}$ by \eqref{eq:ratio estimator} and update $\theta$ with dFlowGRPO objective $\mc{J}_{\text{dFlowGRPO}}(\theta)$ (\eqref{eq:GRPO_DFM}).
\EndWhile
\end{algorithmic}
\end{algorithm}

\section{Proof of \cref{thm:transition ratio}}\label{sec:proof}
\begin{proof}
Note that for each denoising step, the tokens of $\rvx_{t_{k+1}}$ are independent conditioning on the last state $\rvx_{t_k}$; that is, $p_\theta(\rvx_{t_{k+1}}|\rvx_{t_{k}})=\prod_{d\in\brac{\mc{D}}}p_\theta^d(\rvx_{t_{k+1}}^d|\rvx_{t_{k}})$. According to the sampling algorithm (\cref{alg:Euler sampler in general}), the transition probability for the $d$-th token is:
\begin{align*}
        &~p^d_\theta(\rvx_{t_{k+1}}^d|\rvx_{t_k})\\
        =&~\E_{X_1^d\sim p^{\theta,d}_{1|t_{k}}(\cdot|\rvx_{t_{k}})}\Big\{\mathbbm{1}(\rvx_{t_k}^d=\rvx_{t_{k+1}}^d)\parenBig{\exp\parenBig{-(t_{k+1}-t_k)\lambda_{t_k}^d(\rvx_{t_k}^d,X_1^d)}}\\
        &~\quad+\mathbbm{1}(\rvx_{t_k}^d\neq\rvx_{t_{k+1}}^d)\parenBig{\frac{Q_{t_k}(\rvx_{t_k}^d,\rvx_{t_{k+1}}^d|X_1^d)}{\lambda_{t_k}^d(\rvx_{t_k}^d,X_1^d)}\setBig{1-\exp\parenBig{-(t_{k+1}-t_k)\lambda_{t_k}^d(\rvx_{t_k}^d,X_1^d)}}}\Big\}\\
        =&~\E_{X_1^d\sim p^{\theta_{old},d}_{1|t_{k}}(\cdot|\rvx_{t_{k}})}\Big\{\mathbbm{1}(\rvx_{t_k}^d=\rvx_{t_{k+1}}^d)\exp\parenBig{-(t_{k+1}-t_k)\lambda_{t_k}^d(\rvx_{t_k}^d,X_1^d)}\frac{p^{\theta,d}_{1|t_{k}}(X_1^d|\rvx_{t_{k}})}{p^{\theta_{old},d}_{1|t_{k}}(X_1^d|\rvx_{t_{k}})}\\
        &~\quad+\mathbbm{1}(\rvx_{t_k}^d\neq\rvx_{t_{k+1}}^d)\frac{Q_{t_k}(\rvx_{t_k}^d,\rvx_{t_{k+1}}^d|X_1^d)}{\lambda_{t_k}^d(\rvx_{t_k}^d,X_1^d)}\times\setBig{1-\exp\parenBig{-(t_{k+1}-t_k)\lambda_{t_k}^d(\rvx_{t_k}^d,X_1^d)}}\frac{p^{\theta,d}_{1|t_{k}}(X_1^d|\rvx_{t_{k}})}{p^{\theta_{old},d}_{1|t_{k}}(X_1^d|\rvx_{t_{k}})}\Big\},
    \end{align*}
where $\lambda_{t_k}^d(\rvx_{t_k}^d,X_1^d)=\sum_{z^d\neq \rvx_{t_k}^d}Q^d_{t_k}(\rvx_{t_k}^d,z^d|X_1^d)$.

Thus, the transition probability ratio between the current policy and the old policy is
    \begin{align*}
        r_k(\theta)=\prod_{d\in\brac{\mc{D}}}\frac{p^d_\theta(\rvx_{t_{k+1}}^d|\rvx_{t_k})}{p^d_{\theta_{old}}(\rvx_{t_{k+1}}^d|\rvx_{t_k})}=\prod_{d\in\brac{\mc{D}}}\E_{X_1^d\sim p^{\theta_{old}}_{1|t_{k}}(\cdot|\rvx_{t_{k}})} \setBig{w_{Q,k}^{\theta_{old},d}(\rvx_{t_{k}},\rvx_{t_{k+1}}^d,X_1^d)\frac{p^{\theta,d}_{1|t_{k}}(X_1^d|\rvx_{t_{k}})}{p^{\theta_{old},d}_{1|t_{k}}(X_1^d|\rvx_{t_{k}})}},
    \end{align*}
    where the rate-dependent weight $w_{Q,k}^{\theta_{old},d}(\rvx_{t_{k}},\rvx_{t_{k+1}}^d,X_1^d)$ is defined by
    \begin{align*}
       &~w_Q^{\theta_{old},d}(\rvx_{t_{k}},\rvx_{t_{k+1}}^d,X_1^d)=\frac{1}{p_{\theta_{old}}^d(\rvx_{t_{k+1}}^d|\rvx_{t_k})}\Big\{\mathbbm{1}(\rvx_{t_k}^d=\rvx_{t_{k+1}}^d)\parenBig{\exp\parenBig{-(t_{k+1}-t_k)\lambda_{t_k}^d(\rvx_{t_k}^d,X_1^d)}}\\
        &~\qquad \qquad +\mathbbm{1}(\rvx_{t_k}^d\neq\rvx_{t_{k+1}}^d)\parenBig{\frac{Q_{t_k}(\rvx_{t_k}^d,\rvx_{t_{k+1}}^d|X_1^d)}{\lambda_{t_k}^d(\rvx_{t_k}^d,X_1^d)}\setBig{1-\exp\parenBig{-(t_{k+1}-t_k)\lambda_{t_k}^d(\rvx_{t_k}^d,X_1^d)}}}\Big\}. 
    \end{align*}
The transition probability ratio derived above can be estimated by the empirical mean for a general probability path and the associated conditional rate. In the following, we give an example for the specific choice of the mixture path with conditional rate $Q_t^{d}(x^d,z^d|x_1^d)=\frac{\dot{\kappa}_t}{1-\kappa_t}(\delta_{x_1^d}(z^d)-\delta_{x^d}(z^d))$.

\paragraph{Example (mixture path)}

For the mixture path with conditional rate $Q_t^{d}(x^d,z^d|x_1^d)=\frac{\dot{\kappa}_t}{1-\kappa_t}(\delta_{x_1^d}(z^d)-\delta_{x^d}(z^d))$, we have
\begin{align*}
    \lambda_{t_k}^d(\rvx_{t_k}^d,X_1^d)=\begin{cases}
        0,&~\text{ if }\rvx_{t_k}^d=X_1^d\\
        \frac{\dot{\kappa}_t}{1-\kappa_t},&~\text{ if } \rvx_{t_k}^d\neq X_1^d
    \end{cases},
\end{align*}
which implies that
\begin{align*}
    r_k(\theta)=&~\prod_{d\in\brac{\mc{D}}}\frac{p^d_\theta(\rvx_{t_{k+1}}^d|\rvx_{t_k})}{p^d_{\theta_{old}}(\rvx_{t_{k+1}}^d|\rvx_{t_k})}\\
    =&~\prod_{d\in\brac{\mc{D}}}\Big\{\mathbbm{1}(\rvx_{t_k}^d\neq\rvx_{t_{k+1}}^d)\frac{p^{\theta,d}_{1|t_{k}}(\rvx_{t_{k+1}}^d|\rvx_{t_{k}})}{p^{\theta_{old},d}_{1|t_{k}}(\rvx_{t_{k+1}}^d|\rvx_{t_{k}})}\\
    &~\quad+\mathbbm{1}(\rvx_{t_k}^d=\rvx_{t_{k+1}}^d)\frac{p^{\theta,d}_{1|t_{k}}(\rvx_{t_{k+1}}^d|\rvx_{t_{k}})+(1-p^{\theta,d}_{1|t_{k}}(\rvx_{t_{k+1}}^d|\rvx_{t_{k}}))g_Q(k)}{p^{\theta_{old},d}_{1|t_{k}}(\rvx_{t_{k+1}}^d|\rvx_{t_{k}})+(1-p^{\theta_{old},d}_{1|t_{k}}(\rvx_{t_{k+1}}^d|\rvx_{t_{k}}))g_Q(k)}\Big\},
\end{align*}
where $$g_Q(k)=\exp(-(t_{k+1}-t_k)\frac{\dot{\kappa}_t}{1-\kappa_t}).$$
In particular, if the source distribution is $\delta_\mask$, then we have
\begin{align*}
    r_k(\theta)=\prod_{d\in\brac{\mc{D}}}\frac{p^d_\theta(\rvx_{t_{k+1}}^d|\rvx_{t_k})}{p^d_{\theta_{old}}(\rvx_{t_{k+1}}^d|\rvx_{t_k})}=\prod_{d: \rvx_{t_k}^d=\mask \text{ and } \rvx_{t_{k+1}}^d\neq\mask}\frac{p^{\theta,d}_{1|t_{k}}(\rvx_{t_{k+1}}^d|\rvx_{t_{k}})}{p^{\theta_{old},d}_{1|t_{k}}(\rvx_{t_{k+1}}^d|\rvx_{t_{k}})},
\end{align*}
which is independent of the time scheduler.

\end{proof}

\section{Derivation of the gradient}\label{sec:appendix grad}
In this section, we derive the gradient of our proposed dFlowGRPO objective. For simplicity, we omit the clipping operation and take $\beta=0$.

By \cref{thm:transition ratio}, we have
{\small\begin{align*}
    r_k^{(i)}(\theta)=\prod_{d\in\brac{\mc{D}}}\frac{p^d_\theta(\rvx_{t_{k+1}}^{(i),d}|\rvx_{t_k}^{(i)})}{p^d_{\theta_{old}}(\rvx_{t_{k+1}}^{(i),d}|\rvx_{t_k}^{(i)})}=\prod_{d\in\brac{\mc{D}}}\E_{X_1^d\sim p^{\theta_{old}}_{1|t_{k}}(\cdot|\rvx_{t_{k}})} \setBig{w_{Q,k}^{\theta_{old},d}(\rvx_{t_{k}}^{(i)},\rvx_{t_{k+1}}^{(i),d},X_1^d)\frac{p^{\theta,d}_{1|t_{k}}(X_1^d|\rvx_{t_{k}}^{(i)})}{p^{\theta_{old},d}_{1|t_{k}}(X_1^d|\rvx_{t_{k}}^{(i)})}}.
\end{align*}}
Thus, we can obtain that
{\small\begin{align*}
    &~\nabla_\theta\mc{J}_{\text{dFlowGRPO}}(\theta)\\
    =&~ \E\frac{1}{G}\sum_{i=1}^G\frac{1}{K}\sum_{k=0}^{K-1}A_{k}^{(i)}\nabla_\theta \brac{r_k^{(i)}(\theta)}^{\frac{1}{\mc{D}}}\\
    =&~\E\frac{1}{G}\sum_{i=1}^G\frac{1}{K}\sum_{k=0}^{K-1}A_{k}^{(i)} \brac{r_k^{(i)}(\theta)}^{\frac{1}{\mc{D}}}\parenBig{\frac{1}{\mc{D}}\sum_{d=1}^\mc{D}\nabla_\theta\log p^d_\theta(\rvx_{t_{k+1}}^{(i),d}|\rvx_{t_k}^{(i)})}\\
    =&~
    \E\frac{1}{G}\sum_{i=1}^G\frac{1}{K}\sum_{k=0}^{K-1}A_{k}^{(i)} \brac{r_k^{(i)}(\theta)}^{\frac{1}{\mc{D}}}\parenBig{\frac{1}{\mc{D}}\sum_{d=1}^\mc{D}\frac{\E_{X_1^d\sim p^{\theta}_{1|t_{k}}(\cdot|\rvx_{t_{k}})} \setBig{\tilde{w}_{Q,k}^{d}(\rvx_{t_{k}}^{(i)},\rvx_{t_{k+1}}^{(i),d},X_1^d)\nabla_\theta\log p^{\theta,d}_{1|t_{k}}(X_1^d|\rvx_{t_{k}}^{(i)})}}{p^d_\theta(\rvx_{t_{k+1}}^{(i),d}|\rvx_{t_k}^{(i)})}}\\
    =&~\E\frac{1}{G}\sum_{i=1}^G\frac{1}{K}\sum_{k=0}^{K-1}A_{k}^{(i)} \brac{r_k^{(i)}(\theta)}^{\frac{1}{\mc{D}}}\parenBig{\frac{1}{\mc{D}}\sum_{d=1}^\mc{D}\E_{X_1^d\sim p^{\theta}_{1|t_{k}}(\cdot|\rvx_{t_{k}})} \setBig{w_{Q,k}^{\theta,d}(\rvx_{t_{k}}^{(i)},\rvx_{t_{k+1}}^{(i),d},X_1^d)\nabla_\theta\log p^{\theta,d}_{1|t_{k}}(X_1^d|\rvx_{t_{k}}^{(i)})}},
\end{align*}}
where we use the definition of the rate-dependent weight (see, \eqref{eq:unnormalized weight} and \eqref{eq:normalized weight}) and the expectation is taken over $\rvc\sim p_c, \set{\rvx^{(i)}}_{i=1}^G\sim \pi_{\theta_{old}}(\cdot|\rvc)$. Consequently, the gradient of the dFlowGRPO objective has a form similar to the gradient of standard Flow-GRPO ($\brac{r_k^{(i)}(\theta)}^{\frac{1}{\mc{D}}}\approx 1$), with the policy score replaced by the posterior score weighted by rate-dependent weight.

\section{Some discussions and additional experiments}
In this section, we discuss other RL methods for DFM such as mean-field approximation methods \citep{zhao2025d1} and online DPO \citep{guo2024direct}, and conduct additional experiments for these methods. We also evaluate our dFlowGRPO on PickScore and GenEval with different NFEs.

\subsection{diffu-GRPO and diffu-GSPO}
\cite{zhao2025d1} proposed to use mean-field approximation to estimate the log-likelihood for dLLMs in RL training. The corresponding diffu-GRPO is

\begin{equation}\label{eq:diffugrpo}
    \begin{aligned}
    &~\mc{J}_{\text{diffu-GRPO}}(\theta)=\E_{\rvc\sim p_c,\set{\rvx^{(i)}}_{i=1}^G\sim\pi_{\theta_{old}}(\cdot|\rvc)}\\
    &~\qquad\setBig{\frac{1}{G}\sum_{i=1}^G\frac{1}{\mc{D}}\sum_{d=1}^\mc{D}\parenBig{\min(r^{(i)}_d(\theta)\hat{A}^{(i)},\text{clip}(r^{(i)}_d(\theta),1-\eps,1+\eps)\hat{A}^{(i)})-\beta D_{KL}(\pi_\theta\|\pi_{ref})}},
\end{aligned}
\end{equation}
where $r^{(i)}_d(\theta)=\frac{p_\theta(\rvx_{1}^{(i),d}|\rvx_{0}^{(i)},\rvc)}{p_{\theta_{old}}(\rvx_{1}^{(i),d}|\rvx_{0}^{(i)},\rvc)}$ is the token-level posterior ratio at $\rvx_0$. If the source distribution is a masked point mass, then the ratio only depends on $\rvc$ and $\rvx_1^{(i)}$.

Following GSPO \citep{zheng2025group}, we can propose diffu-GSPO to improve the performance of diffu-GRPO by simply replacing the arithmetic mean with geometric mean of token-level posterior ratios (see \cref{fig:meanfield approximation} for comparison):
{\small\begin{equation}\label{eq:diffugspo}
    \begin{aligned}
    &~\mc{J}_{\text{diffu-GSPO}}(\theta)=\E_{\rvc\sim p_c,\set{\rvx^{(i)}}_{i=1}^G\sim\pi_{\theta_{old}}(\cdot|\rvc)}\\
    &~\quad\setBig{\frac{1}{G}\sum_{i=1}^G\parenBig{\min\parenBig{\bracBig{\prod_{d=1}^\mc{D}r^{(i)}_d(\theta)}^{\frac{1}{\mc{D}}}\hat{A}^{(i)},\text{clip}(\bracBig{\prod_{d=1}^\mc{D}r^{(i)}_d(\theta)}^{\frac{1}{\mc{D}}},1-\eps,1+\eps)\hat{A}^{(i)}}-\beta D_{KL}(\pi_\theta\|\pi_{ref})}}.
\end{aligned}
\end{equation}}
Compared with our proposed dFLowGRPO, the mean-field approximation in diffu-GRPO and diffu-GSPO only use the information of the samples at the terminal time, leading to inaccurate log-likelihood or trajectory probability estimation. Additionally, it can make RL training unstable or even collapse for a general probability path (see \cref{fig:meanfield approximation}).

\subsection{dFlowDPO: Generalization to online DPO}
In this section, we generalize our approach to online DPO \citep{guo2024direct}.
Consider the following regularized RLHF problem:
\begin{align*}
    \pi^*=\argmax_{\pi_\theta} \E_{\rvc\sim p_c,\rvx\sim\pi_{\theta}(\cdot|\rvc)}\setBig{ \brac{r(\rvx_1,\rvc)}- \beta D_{KL}\parenBig{\pi_\theta(\rvx_{ t_0:t_K}|\rvc)\Big\|\pi_{\text{old}}(\rvx_{t_0:t_K}|\rvc)}}.
\end{align*}
The optimal policy $\pi^*$ has the closed-form $\pi^*(\rvx_{t_0:t_K}|\rvc)\propto \exp(r(\rvx_1,\rvc)/\beta)\pi_{old}(\rvx_{t_0:t_K}|\rvc)$. Plugging $r(\rvx_1,\rvc)=\beta \log \frac{\pi^*(\rvx_{t_0:t_K}|\rvc)}{\pi_{old}(\rvx_{t_0:t_K}|\rvc)}+\log C(\rvc)$ into the log-likelihood of BT model, then we have the following online DPO training objective for discrete flow models:
{\small\begin{align*}
    &~\E_{\rvc\sim p_c,\set{\rvx^{(i)}}_{i=1}^G\sim\pi_{\theta_{old}}(\cdot|\rvc)}\setBig{-\log \sigma\parenBig{\beta \log \frac{\pi_\theta(\rvx^+_{t_0:t_K}|\rvc)}{\pi_{old}(\rvx^+_{t_0:t_K}|\rvc)}-\beta \log \frac{\pi_\theta(\rvx^-_{t_0:t_K}|\rvc)}{\pi_{old}(\rvx^-_{t_0:t_K}|\rvc)}}}\\
    \leq&~\E_{\rvc\sim p_c,\set{\rvx^{(i)}}_{i=1}^G\sim\pi_{\theta_{\theta_{old}}}(\cdot|\rvc)}\setBig{-\frac{1}{K}\sum_{k=0}^{K-1}\log \sigma\parenBig{\beta K\setBig{\log \frac{p_\theta(\rvx^+_{t_k}|\rvx^+_{t_{k-1}},\rvc)}{p_{\theta_{old}}(\rvx^+_{t_k}|\rvx^+_{t_{k-1}},\rvc)}- \log \frac{p_\theta(\rvx^-_{t_k}|\rvx^-_{t_{k-1}},\rvc)}{p_{\theta_{old}}(\rvx^-_{t_k}|\rvx^-_{t_{k-1}},\rvc)}}}}\\
    \overset{\triangle}{=}&~\mc{J}_{\text{dFlowDPO}}(\theta),
\end{align*}}
where we use Jensen's inequality (for training efficiency, similar to \cite{Wallace_2024_CVPR}), and $\rvx^+_{t_0:t_K}$ and $\rvx^-_{t_0:t_K}$ are the trajectories with highest reward and lowest reward in the group, respectively.

\begin{figure}[htbp]
    \centering
    \includegraphics[width=0.55\linewidth]{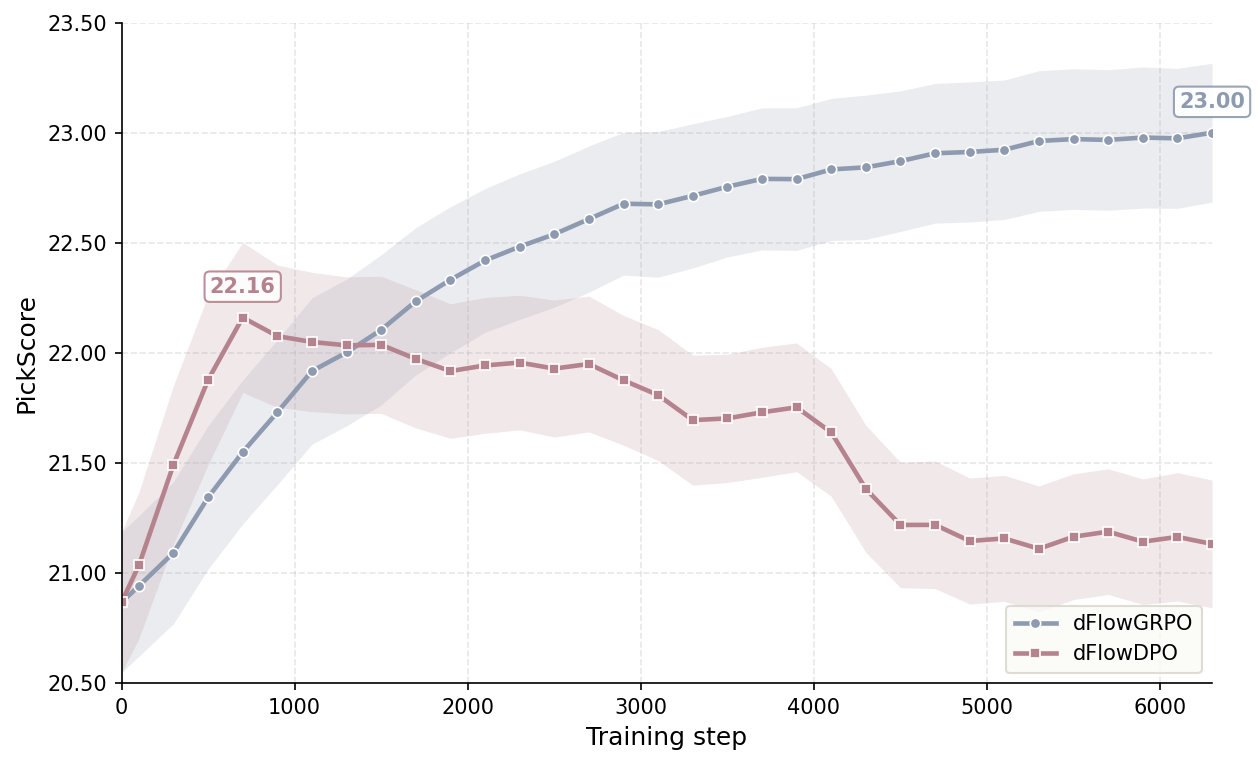}
    \caption{Comparison of dFlowGRPO and dFlowDPO on PickScore.}
    \label{fig:dFlowDPO vs dFlowGRPO}
\end{figure}

We investigate the performance of dFlowDPO on PickScore. For a fair comparison, we set the same number of denoising steps for both dFlowDPO and dFlowGRPO without KL regularization. we choose $\beta=100/K$ for dFlowDPO suggested by \citep{liu2025flow}. \cref{fig:dFlowDPO vs dFlowGRPO} presents the training dynamics of dFlowDPO and dFlowGRPO with PickScore reward, where dFlowDPO increases faster than dFlowGRPO initially and then decreases steadily after 700 training steps.

\subsection{Evaluation with different NFEs}
To investigate the effect of NFEs, we report the evaluation results on PickScore and GenEval with different NFEs, which is presented in \cref{tab: effect of nfes}. The results show that dFlowGRPO has a strong performance with few-step generation (e.g., 8 NFEs), achieving the best rewards around 16-32 NFEs.

\begin{table}
    \centering
    \caption{\small Evaluation with different NFEs on image generation tasks. The models are trained with corresponding rewards.}
    \vspace{3pt}
    \begin{tabular}{c c c c c c c}
        \toprule
        NFEs & 4 & 8 & 16 & 32 & 48 & 64 \\
        \midrule
        GenEval & 0.46 & 0.91 & 0.93 & 0.93 & 0.93 & 0.92 \\
        PickScore & 21.55 & 22.88 & 22.99 & 22.98 & 22.97 & 22.94 \\
        \bottomrule
    \end{tabular}
    \label{tab: effect of nfes}
\end{table}

\section{Implementation detail}\label{sec:implementation details}
We use 8 NVIDIA H100 GPUs for all experiments. We fix group size $G=24$ and the number of MC samples $n=24$ across tasks. We train the model using batches of 16 prompts per gradient update, with gradient clipping at $1.0$ and the AdamW optimizer with $\beta_1=0.9$ and $\beta_2=0.999$. We set the learning rate $1\times 10^{-5}$ for text-to-image generation and $1\times 10^{-6}$ for multimodal understanding. We use asymmetric clipping parameters $\eps_{\text{high}}=1.5\cdot\eps_{\text{low}}=1.5\times 10^{-3}$ for GRPO objective by searching from $\set{1.5\times 10^{-2},1.5\times 10^{-3},1.5\times 10^{-4}}$, following the clip-higher strategy proposed in DAPO \citep{liu2026dapo}. For training stability, we update the old policy once per 48 gradient update. We train dFlowGRPO for 3300 steps on GenEval, 6300 steps on PickScore, and 1500 steps on ScienceQA. We report the computational cost of our experiments in \cref{tab:computational_cost}.

\begin{table}[t]
    \centering
    \caption{\small The wall-clock time of different settings for the first 100 gradient updates on 8 H100 GPUs.}
    \label{tab:computational_cost}
    \vspace{3pt}
    \begin{tabular}{lc}
        \toprule
        Method & Time (hours) \\
        \midrule
        dFlowGRPO (ScienceQA) & $2.56$ \\
        dFlowGRPO (PickScore) & $1.28$ \\
        dFlowGRPO (GenEval) & $1.91$ \\
        dFlowGRPO (GenEval, w/ KL) & $2.02$ \\
        dFlowGRPO (PickScore, $n=1$) & $0.93$ \\
        diffu-GRPO & $0.87$ \\
        diffu-GSPO & $0.83$ \\
        dFlowDPO & $0.84$ \\
        \bottomrule
    \end{tabular}
\end{table}

\section{Qualitative results}
In this section, we present qualitative results, including visualizations of the sampling trajectories during training and comparisons between the base model and the model after dFlowGRPO training. 
\subsection{Trajectory results}
\cref{fig:trajectory1,fig:trajectory2} show how the denoising trajectories change over the course of training. We can see that dFlowGRPO training enables the model to form more details in the early denoising steps in the course of training.
\begin{figure}[htbp]
    \centering
    \includegraphics[width=0.7\linewidth]{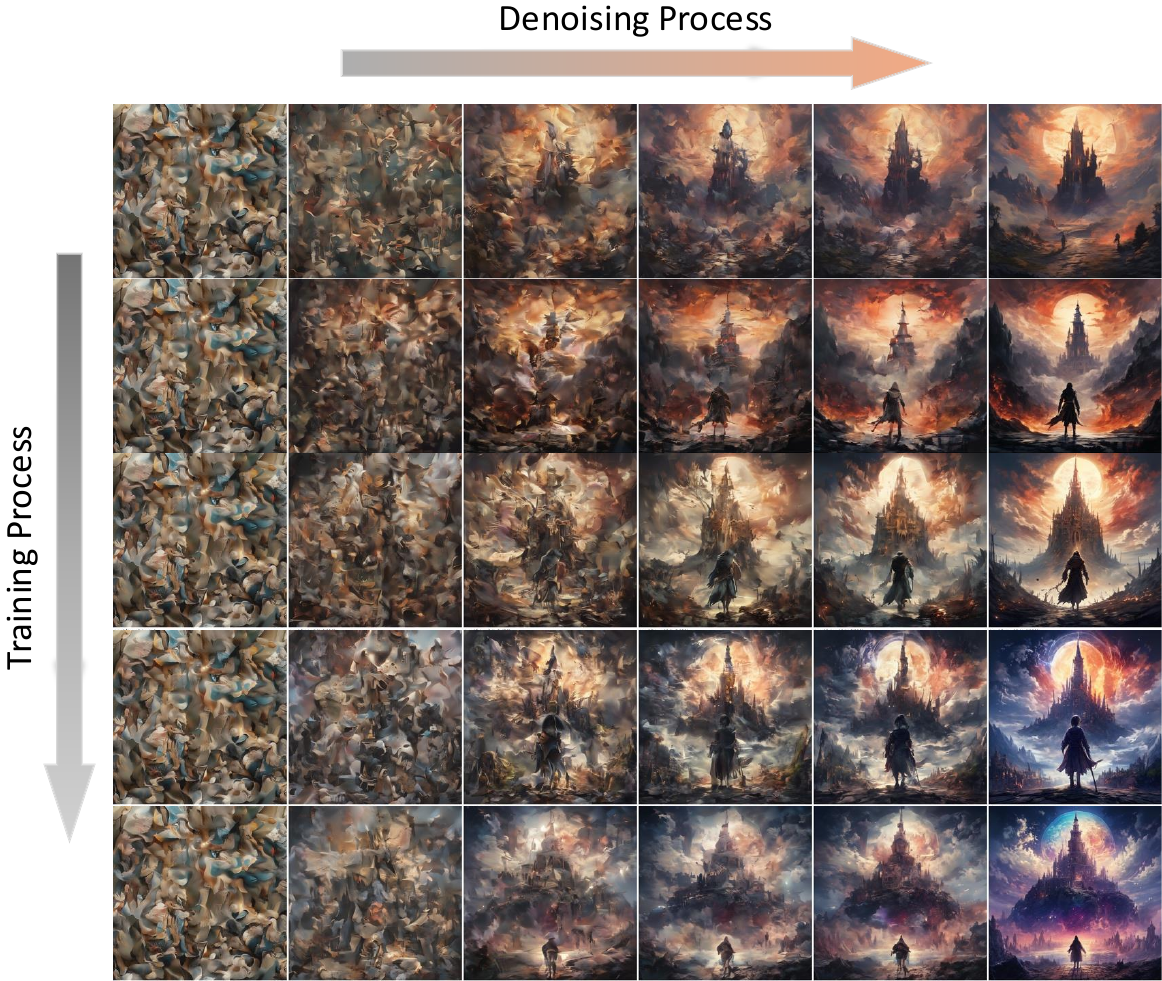}
    
    \vspace{8pt}
    \includegraphics[width=0.7\linewidth]{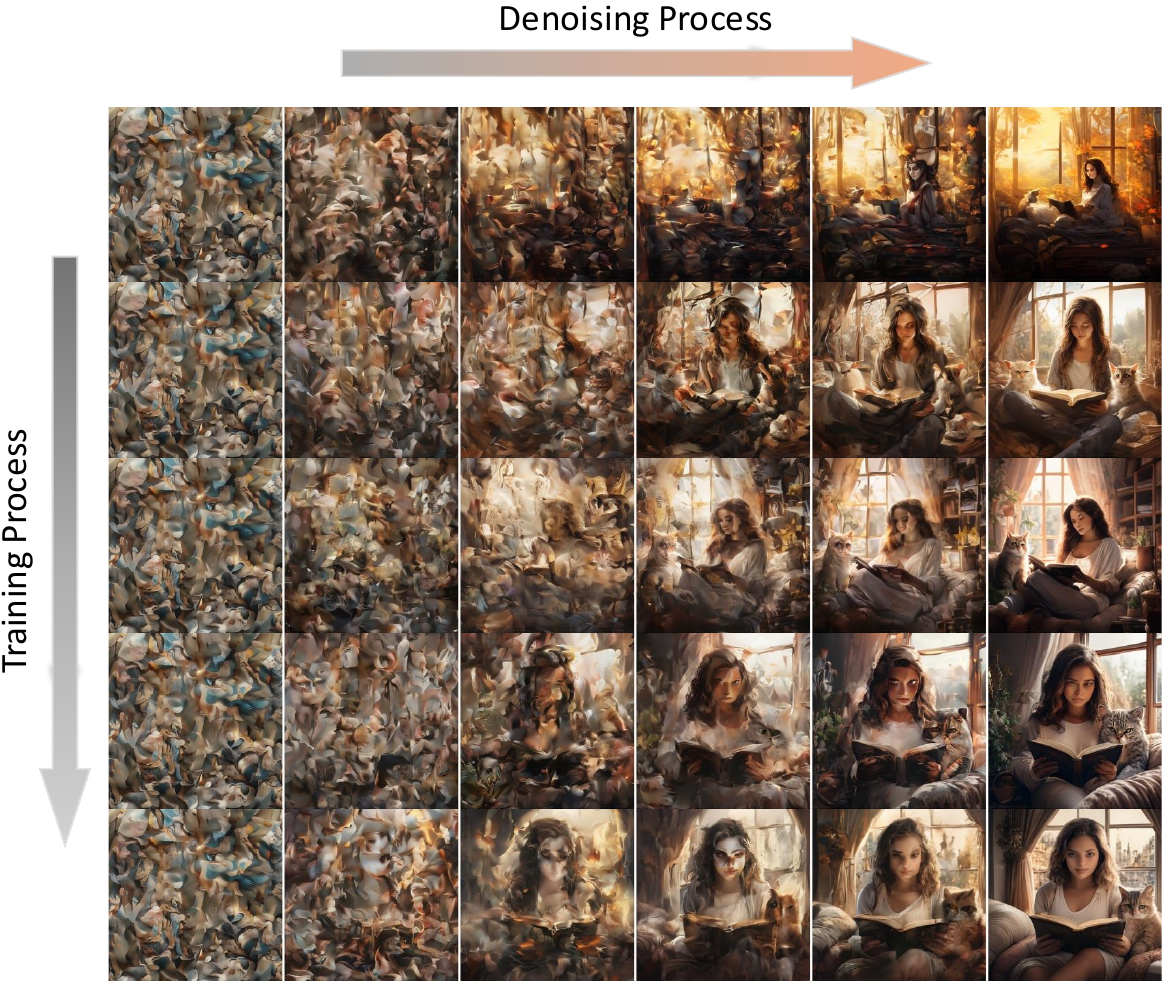}
    \caption{\small Visualization of the trajectory of dFlowGRPO during training with PickScore reward.}
    \label{fig:trajectory1}
\end{figure}

\begin{figure}[htbp]
    \centering
    \includegraphics[width=0.7\linewidth]{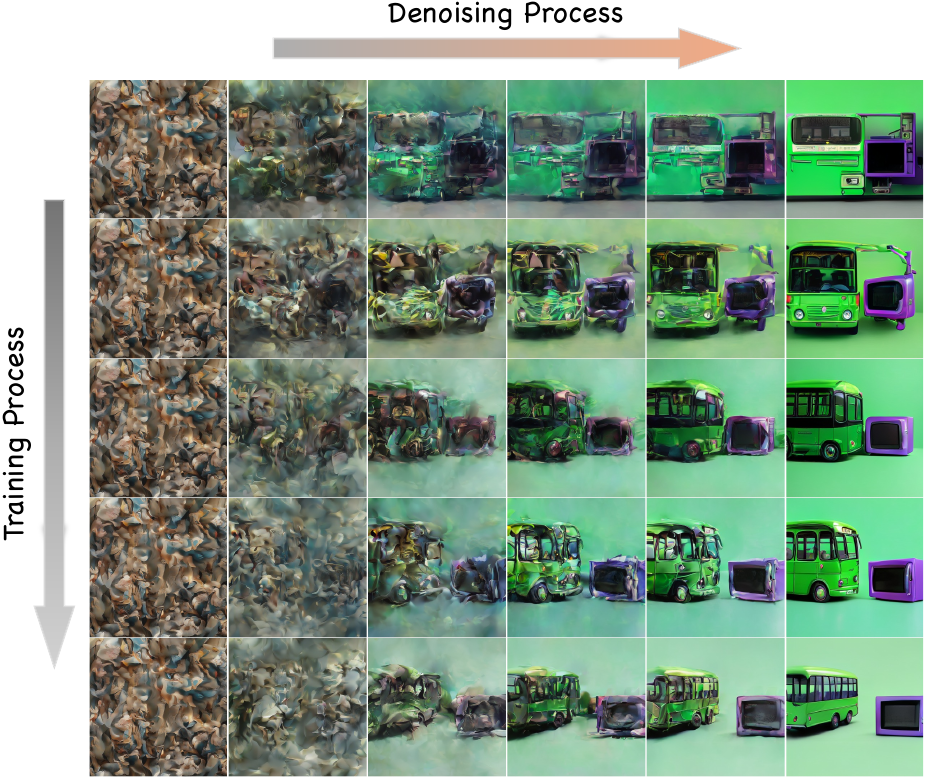}
    
    \vspace{8pt}
    \includegraphics[width=0.7\linewidth]{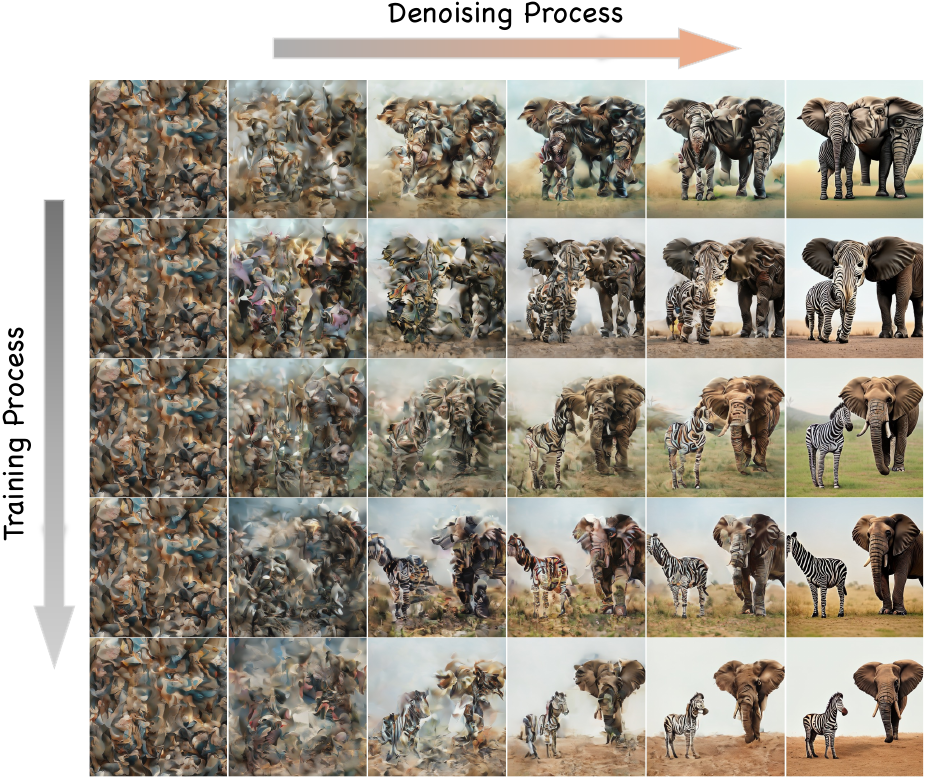}
    \caption{\small Visualization of the trajectory of dFlowGRPO during training with GenEval reward.}
    \label{fig:trajectory2}
\end{figure}

\subsection{Results at terminal time}
We present the qualitative results for dFlowGRPO and the base model, FUDOKI, on both image generation and multimodal understanding tasks in \cref{fig:generation quality,fig:understanding quality1,fig:understanding quality2,fig:understanding quality3}. We can see that the generation quality and multimodal understanding capability of dFlowGRPO significantly outperform those of the base model.
\begin{figure}[htbp]
    \centering

    \begin{subfigure}{0.16\textwidth}
        \centering
        \includegraphics[width=\linewidth]{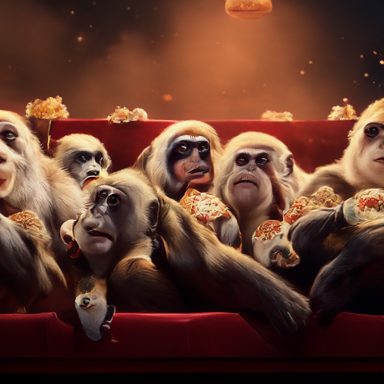}
    \end{subfigure}
    \hfill
    \begin{subfigure}{0.16\textwidth}
        \centering
        \includegraphics[width=\linewidth]{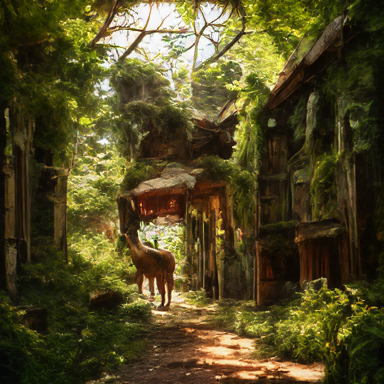}
    \end{subfigure}
    \hfill
    \begin{subfigure}{0.16\textwidth}
        \centering
        \includegraphics[width=\linewidth]{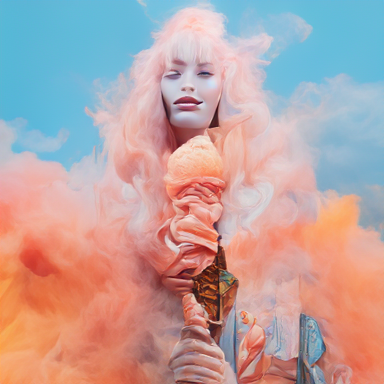}
    \end{subfigure}
    \hfill
    \begin{subfigure}{0.16\textwidth}
        \centering
        \includegraphics[width=\linewidth]{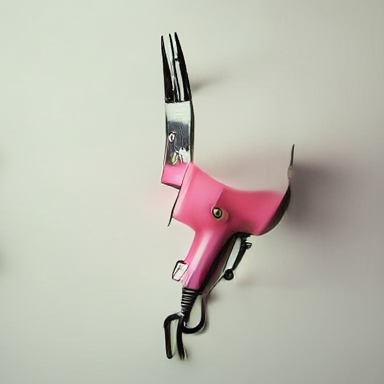}
    \end{subfigure}
    \hfill
    \begin{subfigure}{0.16\textwidth}
        \centering
        \includegraphics[width=\linewidth]{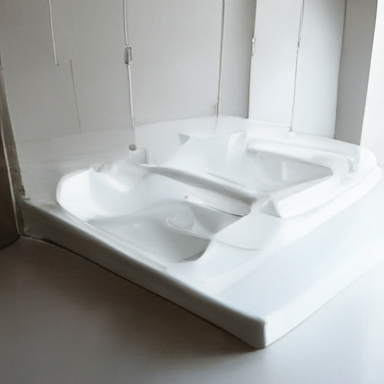}
    \end{subfigure}
    \hfill
    \begin{subfigure}{0.16\textwidth}
        \centering
        \includegraphics[width=\linewidth]{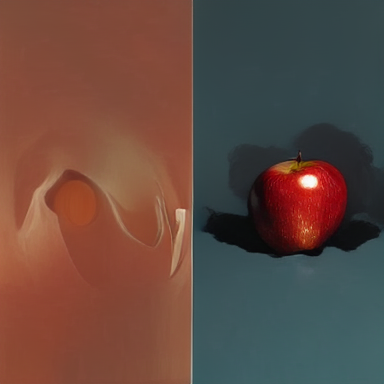}
    \end{subfigure}

    \vspace{0.5em}

    \begin{subfigure}{0.16\textwidth}
        \centering
        \includegraphics[width=\linewidth]{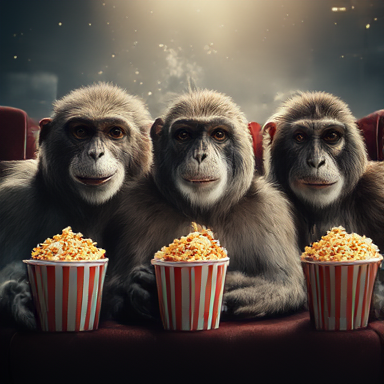}
    \end{subfigure}
    \hfill
    \begin{subfigure}{0.16\textwidth}
        \centering
        \includegraphics[width=\linewidth]{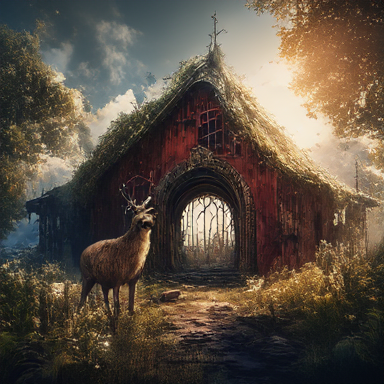}
    \end{subfigure}
    \hfill
    \begin{subfigure}{0.16\textwidth}
        \centering
        \includegraphics[width=\linewidth]{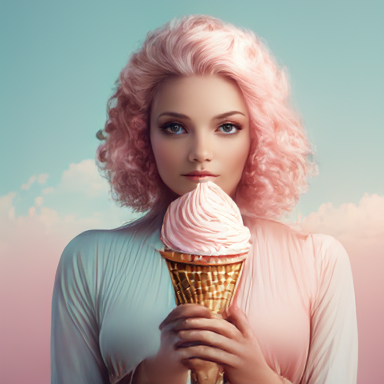}
    \end{subfigure}
    \hfill
    \begin{subfigure}{0.16\textwidth}
        \centering
        \includegraphics[width=\linewidth]{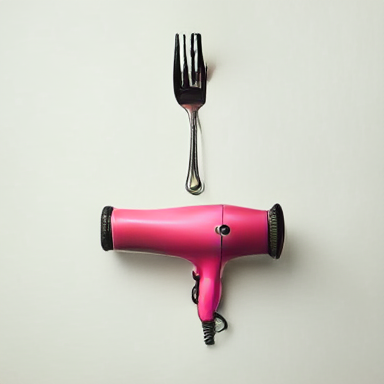}
    \end{subfigure}
    \hfill
    \begin{subfigure}{0.16\textwidth}
        \centering
        \includegraphics[width=\linewidth]{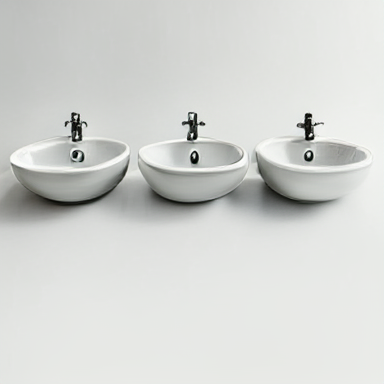}
    \end{subfigure}
    \hfill
    \begin{subfigure}{0.16\textwidth}
        \centering
        \includegraphics[width=\linewidth]{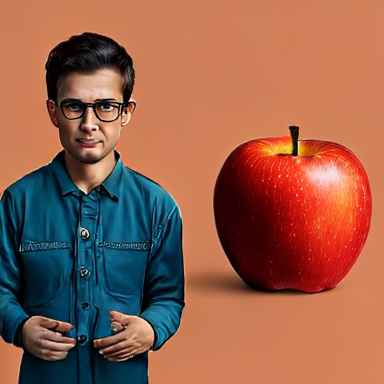}
    \end{subfigure}

    \caption{\small Qualitative comparison of generation quality between the base model FUDOKI (top) and dFlowGRPO (bottom).}
    \label{fig:generation quality}
\end{figure}

\begin{figure}
    \centering
    \includegraphics[width=1\linewidth]{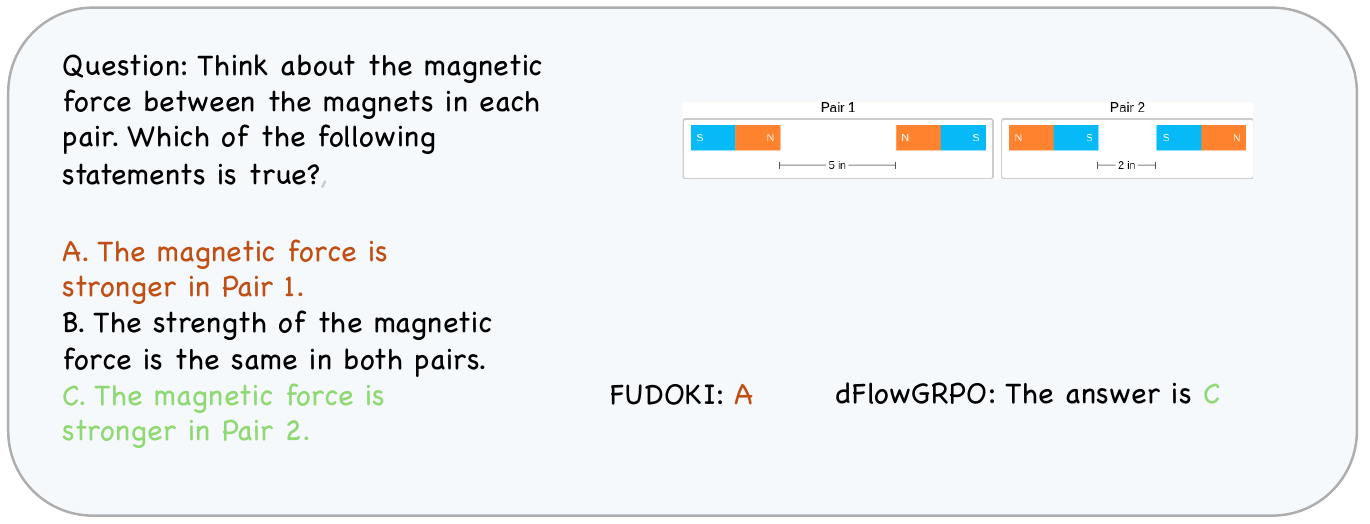}
    \caption{\small Qualitative comparison of understanding capability between dFlowGRPO and the base model FUDOKI. The correct answer is highlighted in green and the wrong answer is highlighted in red.}
    \label{fig:understanding quality1}
\end{figure}

\begin{figure}
    \centering
    \includegraphics[width=1\linewidth]{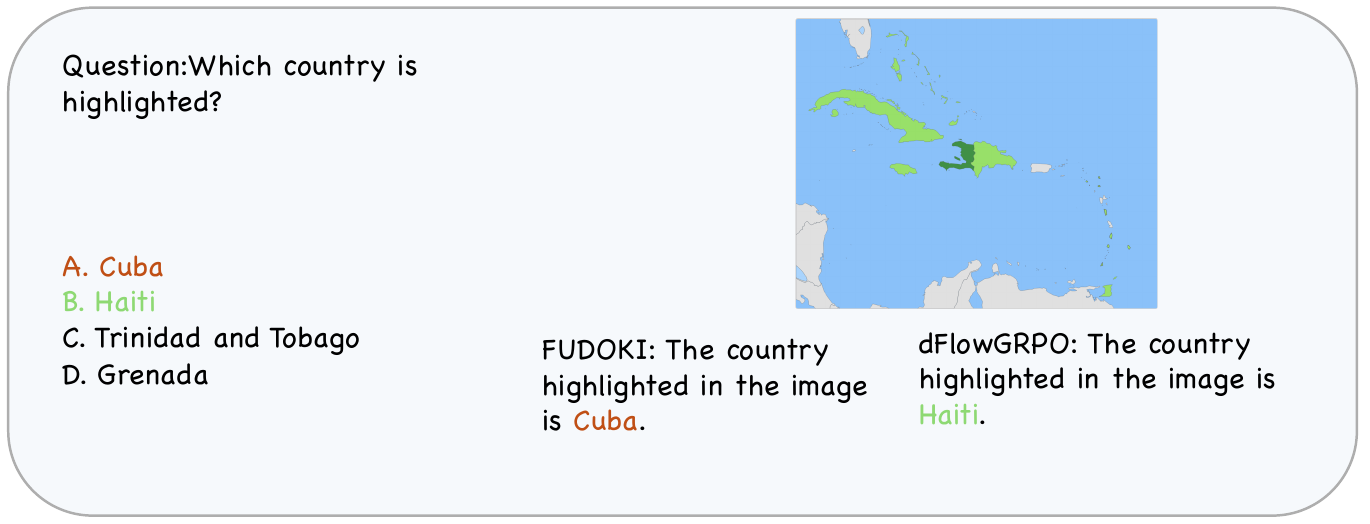}
    \caption{\small Qualitative comparison of understanding capability between dFlowGRPO and the base model FUDOKI. The correct answer is highlighted in green and the wrong answer is highlighted in red.}
    \label{fig:understanding quality2}
\end{figure}

\begin{figure}
    \centering
    \includegraphics[width=1\linewidth]{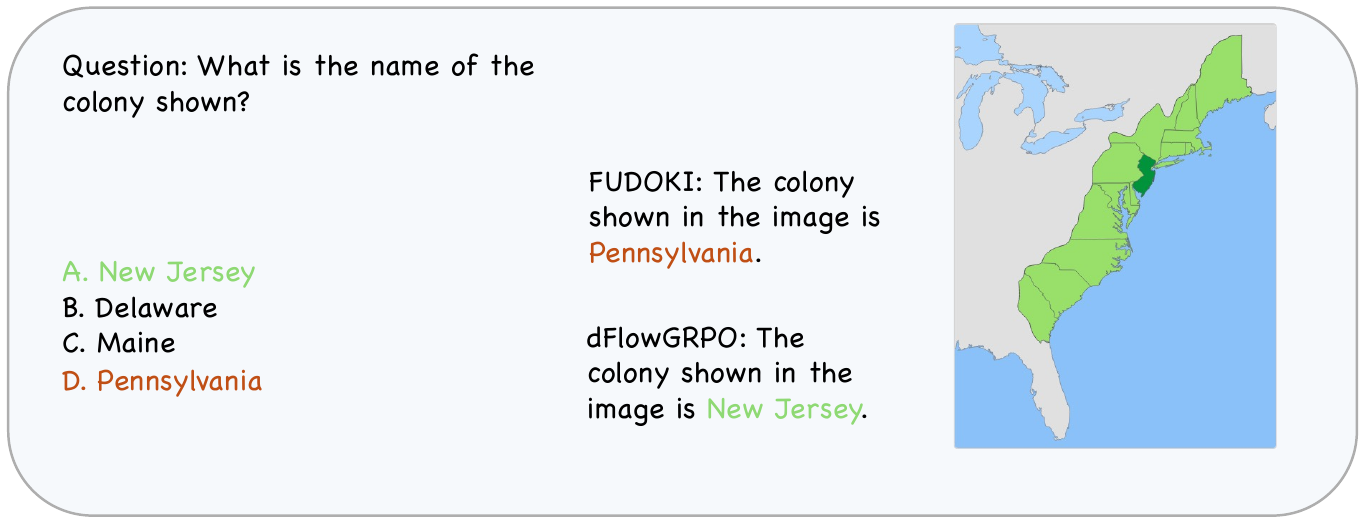}
    \caption{\small Qualitative comparison of understanding capability between dFlowGRPO and the base model FUDOKI. The correct answer is highlighted in green and the wrong answer is highlighted in red.}
    \label{fig:understanding quality3}
\end{figure}

\begin{figure}
    \centering
    \includegraphics[width=1\linewidth]{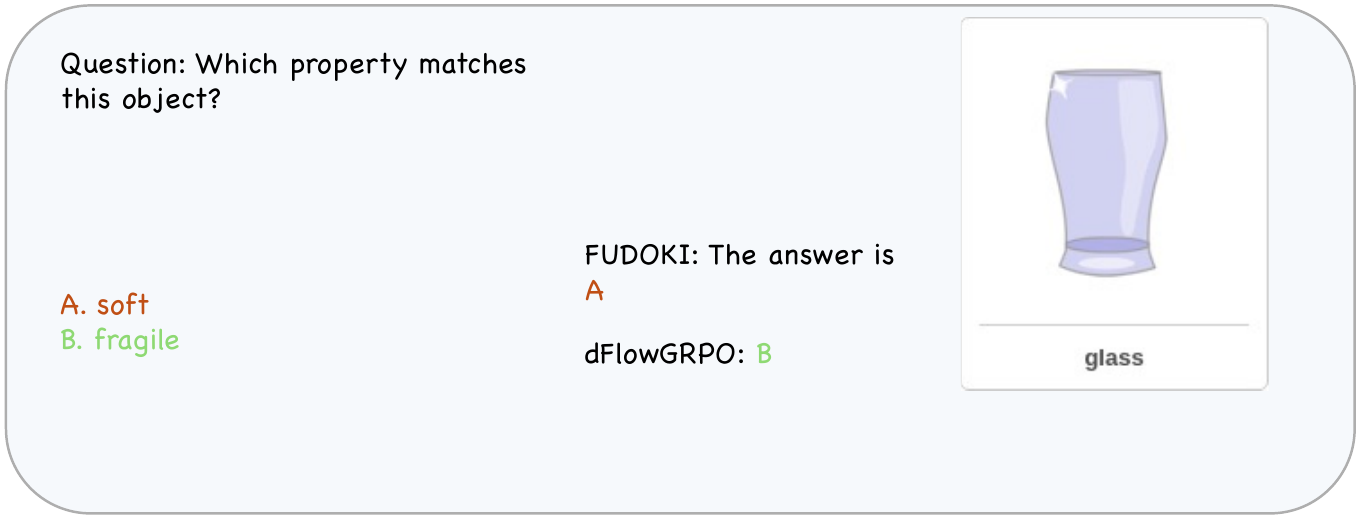}
    \caption{\small Qualitative comparison of understanding capability between dFlowGRPO and the base model FUDOKI. The correct answer is highlighted in green and the wrong answer is highlighted in red.}
    \label{fig:understanding quality4}
\end{figure}

\end{document}